\newtheorem{theorem}{Theorem}
\newtheorem{lemma}[theorem]{Lemma}
\newtheorem{definition}[theorem]{Definition}
\newtheorem{fact}[theorem]{Fact}
\begin{document}

\title{Tight Memory-Regret Lower Bounds for \\ Streaming Bandits}

\author[1]{Shaoang Li}
\author[1]{Lan Zhang}
\author[1]{Junhao Wang}
\author[1]{Xiang-Yang Li}

\affil[1]{University of Science and Technology of China}

\affil[ ]{\{lishaoa, junhaow\}@mail.ustc.edu.cn,\{zhanglan, xiangyangli\}@ustc.edu.cn}
\date{}

\maketitle

\begin{abstract}
In this paper, we investigate the streaming bandits problem, wherein the learner aims to minimize regret by dealing with online arriving arms and sublinear arm memory. 
We establish the \emph{tight} worst-case regret lower bound of $\Omega \left( (TB)^{\alpha} K^{1-\alpha}\right), \alpha = 2^{B} / (2^{B+1}-1)$ for any algorithm with a time horizon $T$, number of arms $K$, and number of passes $B$.  
The result reveals a separation between the stochastic bandits problem in the classical centralized setting and the streaming setting with bounded arm memory. 
Notably, in comparison to the well-known $\Omega(\sqrt{KT})$ lower bound, an additional double logarithmic factor is unavoidable for any streaming bandits algorithm with sublinear memory permitted.
Furthermore, we establish the first  instance-dependent lower bound of $\Omega \left(T^{1/(B+1)} \sum_{\Delta_x>0} \frac{\mu^*}{\Delta_x}\right)$ for streaming bandits.  
These lower bounds are derived through a unique reduction from the regret-minimization setting to the sample complexity analysis for a sequence of $\epsilon$-optimal arms identification tasks, which maybe of independent interest. 
To complement the lower bound, we also provide a multi-pass algorithm that achieves a regret upper bound of $\tilde{O} \left( (TB)^{\alpha} K^{1 - \alpha}\right)$ using constant arm memory.
\end{abstract}

\section{Introduction}
\label{sec:introduction}
The multi-armed bandits framework is a classic paradigm for sequential prediction and decision-making, encompassing applications in various domains including medical trials, web search, recommendation systems, and online advertising. 
The goal of the learner is to maximize the expected cumulative reward by selecting arms (actions, options, decisions) over a sequence of trials. 
In bandits problem, there are $T$ rounds and $K$ arms, each of which is associated with an unknown reward distribution. 
In each round, the algorithm selects an arm from the available candidates and subsequently observes a reward that is independently sampled from the corresponding distribution. 
The algorithm aims to minimize the regret, which quantifies the additional cumulative loss incurred by the algorithm compared to always playing the optimal arm.

Several well-known algorithms, such as UCB1 and Thompson Sampling, have been  proven to achieve near-optimal performance while maintaining a comprehensive record of all arms in memory. However, in scenarios involving limited memory or a large action space, the storage of all arms becomes problematic for the learner. 
An exemplary case arises in recommendation systems, where the learner must choose from millions of items, such as music and movies, to present to users with the objective of maximizing the click-through rate. Accomplishing this task using only a sublinear number of items stored in memory poses a considerable challenge. 
Motivated by practical applications, substantial efforts have been dedicated to the development of algorithms for the streaming bandits setting with constrained $M$ ($M<K$) arm memory \cite{DBLP:conf/stoc/AssadiW20,DBLP:journals/corr/abs-2012-05142,DBLP:conf/colt/AgarwalKP22}. 
In this streaming version, arms arrive one at a time in a sequential stream, and the algorithm can only select arms that are stored in memory.  
Once the memory is full and the algorithm  intends to choose a new arm, it must remove at least one existing arm from memory before selecting the new one. Consequently, all associated statistics, including the arm's index, mean reward, and number of pulls, are discarded.  
The streaming setting necessitates the use of more sophisticated strategies to effectively balance the reading of new arms from the stream and retrieving arms from memory, thereby optimizing the learner's reward.

In recent results, \cite{DBLP:conf/aistats/LiauSPY18} consider stochastic streaming bandits with constant arm memory and propose algorithms aimed at minimizing regret through multiple passes over the stream. 
\cite{DBLP:conf/aaai/ChaudhuriK20} investigate stochastic bandits with $M$ stored arms and multiple passes, and provide an algorithm with regret $\Tilde{O}(KM + (K^{3/2}\sqrt{T})/M)$. 
For the single-pass streaming bandits model, \cite{DBLP:journals/corr/abs-2012-05142} provide the 
$\Omega(K^{1/3}T^{2/3}/M^{7/3})$ worst-case lower bound and 
$\tilde{O}(K^{1/3}T^{2/3})$ upper bound. 
Recently, \cite{DBLP:conf/colt/AgarwalKP22} explore the dependence on the number of passes $B$, and establish the $\Omega(4^{-B}T^{2^B/(2^{B+1}-1)})$ worst-case lower bound and $\Tilde{O}( T^{2^B/(2^{B+1}-1)} \sqrt{KB} )$ regret upper bound for algorithms with $B$ passes and $o(K/B^2)$ arm memory. 
Concurrent to our work, \cite{DBLP:journals/corr/abs-2306-02208} establish tight regret bounds for single-pass streaming bandits. 
However, little effort has been made to comprehend the instance-dependent regret lower bound and the dependency on the number of arms $K$ for the worst-case regret lower bound, despite the fact that the presence of a large action space is the primary motivation for this line of research.

\begin{table*}[t]
    \centering
    \caption{A summary of our results for streaming bandits with time horizon $T$, number of arms $K$, number of passes $B$, bounded arm memory $M<K$, and $\alpha = 2^{B} / (2^{B+1}-1) $. Previous lower bounds \cite{DBLP:conf/colt/AgarwalKP22} hold with $M = o(K/B^2)$ while ours hold with $M = o(K/B)$.}
    \vspace{0.08in}
    \label{tab:comm_bound}
    \resizebox{1\linewidth}{!}{
    \begin{tabular}{|c | c | c | c | c|}
    
    \hline
    \rule{0pt}{15pt}
         & \makecell[c]{\textbf{Previous LB} }  & \textbf{Previous UB} & \makecell[c]{\textbf{Our LB}\\(Theorem \ref{the:worst} and \ref{the:instance})}& \makecell[c]{\textbf{Our UB}\\(Theorem \ref{the:upper})}\\
         
    \hline
    \rule{0pt}{15pt}
        \makecell[c]{1-pass algorithm} & \makecell[c]{$\Omega(K^{1/3}T^{2/3})$ \\ \cite{DBLP:journals/corr/abs-2306-02208} } & \makecell[c]{ $O(K^{1/3}T^{2/3})$ \\ \cite{DBLP:journals/corr/abs-2306-02208} } & $\Omega(K^{1/3}T^{2/3})$ & $\Tilde{O}(K^{1/3}T^{2/3})$\\ 
    \hline
    \rule{0pt}{18pt}
        \makecell[c]{$\log \log T$-pass\\ algorithm} & \makecell[c]{$\Omega \left( 4^{-\log\log T} \sqrt{T} \right)$ \\ \cite{DBLP:conf/colt/AgarwalKP22} } & \makecell[c]{$\Tilde{O}(\sqrt{KT})$ \\ \cite{DBLP:journals/corr/abs-2112-06130} } & $\Omega(\sqrt{KT \log \log T})$  & $\Tilde{O}(\sqrt{KT})$ \\ 
    \hline
    \rule{0pt}{20pt}
        \makecell[c]{$B$-pass algorithm} & \makecell[c]{$\Omega \left( 4^{-B} T^{\alpha} \right) $ \\ \cite{DBLP:conf/colt/AgarwalKP22}  } & \makecell[c]{$\Tilde{O} \left( T^{\alpha} \sqrt{KB }\right)$ \\ \cite{DBLP:conf/colt/AgarwalKP22}} & $\Omega \left( (TB)^{\alpha} K^{1 - \alpha}\right)$ & $\ \,\Tilde{O} \left( (TB)^{\alpha} K^{1 - \alpha}\right)\ \,$\\
    \hline
    \rule{0pt}{20pt}
        \makecell[c]{Instance-dependent\\ bound} & N/A & \makecell[c]{$\Tilde{O} \left(  (B + T^{\frac{1}{B+1}}) \sum\limits_{\Delta_x>0} \frac{1}{\Delta_x}\right)$ \\ \cite{DBLP:conf/colt/AgarwalKP22}  } & $\Omega \left( T^{\frac{1}{B+1}} \sum\limits_{\Delta_x>0} \frac{\mu^*}{\Delta_x} \right)$ & N/A\\
    \hline
    \end{tabular}
    } 
    \vspace{-0.1in}
\end{table*}

\paragraph{Our Contributions.}
In this work, we revisit the streaming bandits with limited memory problem in an effort towards a better understanding of memory-regret trade-off and mutual information for different passes. 
We present \emph{tight} worst-case regret lower bound and instance-dependent regret lower bound for general stochastic streaming bandits with sublinear arm memory. 
We provide a delicate model of how the regret lower bound is influenced by the bounded arm memory $M$, the number of arms $K$, the number of passes $B$, the time horizon $T$, and the "constant" which is solely dependent on the problem instance. 
To complement the lower bound, we devise a multi-pass algorithm that utilizes only constant arm memory and achieves near-optimal regret up to a logarithmic factor. 

For the worst-regret lower bound of the streaming bandits, we establish that any algorithm using $1 \leq B \leq \log\log T$ passes over the stream and $o(K/B)$ arm memory incurs regret $\Omega \left( (TB)^{\alpha} K^{1-\alpha}\right)$, where $\alpha = 2^{B} / (2^{B+1}-1) $. 
We present some results with special $B$: any 1-pass algorithm with $o(K)$ bounded memory incurs $\Omega(K^{1/3}T^{2/3})$ regret; any $\log \log T$-pass algorithm with $o(\frac{K}{\log \log T})$ arm memory incurs $\Omega(\sqrt{KT\log \log T})$ regret. 
Our results demonstrate a separation for stochastic multi-armed bandits problem between the classical centralized setting and the streaming setting when only $B$ passes and $o(K/B)$ bounded arm memory are permitted. 
In comparison to the $\Omega(\sqrt{KT})$ worst-case lower bound for classical bandits, our results reveal that any streaming bandits algorithm with bounded arm memory incurs an additional double logarithmic factor.  
Furthermore, we establish the first instance-dependent lower bound, that any algorithm achieves regret $\Omega \left(T^{1/(B+1)} \sum_{\Delta_x>0} \frac{\mu^*}{\Delta_x}\right)$ with $o(K/B)$ bounded arm memory, where $\Delta_x := \mu^* - \mu_x$ is the gap of arm $x$ with respect to the highest mean. 
Compared to the logarithmic regret bound for the centralized setting, we highlight that an additional regret factor of $O(T^{1/(B+1)})$ is unavoidable due to the  adversarial stream and bounded arm memory. 

To complement the lower bounds, we provide an upper bound for the streaming bandits with multiple passes and bounded arm memory. 
Theoretical analysis of the lower bound implies that, during the $p$-th pass, the number of pulls for any arm should be capped by $\Theta( (TB)^{2\beta_p}K^{-2\beta_p}), \beta_p = \frac{2^{B-p} (2^{p}-1)}{2^{B+1}-1}$.
This crucial idea serves as the foundation for our algorithm and may be of independent interest. 
We show that given a time horizon $T$, a stream consisting of $K$ arms, and $B$ passes over this stream, there exists an algorithm capable of achieving regret $\textnormal{Regret}(\mathcal{A}) = O \left( (TB)^{\alpha} K^{1-\alpha} \sqrt{\log T}\right)$, while only storing two arms in memory. 
Note that the algorithm achieves near-optimal regret up to a logarithmic factor. 
In comparison to the result $O(T^{\alpha} \sqrt{KB \log T})$ in \cite{DBLP:conf/colt/AgarwalKP22}, our upper bound demonstrates a more tight dependence on both $K$ and $B$.

\paragraph{Notations.}
In this paper, we adopt the convention of using bold fonts to represent vectors and matrices. 
For a positive integer $T$, we use $[T]$ to denote the set $\{1, 2, \ldots, T\}$. 
Regarding a set $\mathcal{K}$, the notation $|\mathcal{K}|$ is utilized to denote its cardinality.  
Given a random variable $X$, we use $\mathbb{E}[X]$ for its expectation, standard notions of $H(X)$ to denote its entropy, and $H(X|Y)$ for the conditional entropy of random variable $X$ given random variable $Y$. 
The mutual information is denoted by $I(X;Y) = H(X)-H(X|Y)$.
For an event $\mathcal{E}$, the probability of $\mathcal{E}$ is denoted as $\mathbb{P}[\mathcal{E}]$.


\section{Problem Setup}
\label{sec:problem}
Given $T$ rounds and a finite set $\mathcal{K}$ of ($K = |\mathcal{K}|$) arms, they are indexed by $[T] = 1, 2, \ldots , T$, $[K] = 1, 2, \ldots, K$, respectively. 
Each arm $x \in \mathcal{K}$ is associated with an unknown reward distribution $\nu_x$ defined over $[0,1]$ with a fixed unknown mean $\mu_x$. 
Let $x^*$ and $\mu^*:= \max_{x \in \mathcal{K}} \mu_x$ denote the optimal arm and its expected per-round reward, respectively. 
In each round $t \in [T]$, the algorithm plays an arm $x_t \in \mathcal{K}$  and subsequently observes a scalar-valued reward $r_t$, which is independently sampled from the corresponding reward distribution. 
For the regret minimization setting, the algorithm $\mathcal{A}$ aims to minimize the cumulative regret, which is defined as $\textnormal{Regret}(\mathcal{A}) :=  \sum_{t=1}^T \left( \mu^* - \mu_{x_t} \right)$.

Let $\Delta_x := \mu^* - \mu_x$ denote the expected sub-optimality (gap) of arm $x$ with respect to the highest mean. 
Let $\Delta_{[i]}, i \in [K]$ denote the gap of the $i$-th optimal arm. 
For $0 \leq \epsilon_1 < \epsilon_2 \leq 1$, assume $\epsilon_1 = o(\epsilon_2)$ and consider the set of arms whose gap satisfy $\mathcal{K}^{(\epsilon_1,\epsilon_2)} = \{ x\in \mathcal{K}: \epsilon_1 \leq \Delta_x < \epsilon_2 \}$. 
For the $\epsilon$-optimal arm identification problem, the leaner seeks to identify at least one arm belonging to the set $\mathcal{K}^{(0, \epsilon)}$ with high probability while minimizing the total arms pulls.

We briefly describe the streaming model and memory model below. 
We model the set of arms as a read-only stream in arbitrary (possibly adversarial) order.  
The algorithm is allowed to play at most $B$ passes over the stream. 
While the order of the arms may vary between passes, the set of arms remains the same for each pass.
The algorithm is only allowed to pull arms that are stored in memory.
When the memory becomes full and the algorithm attempts to select a new arm, it must discard at least one arm from memory.
All information pertaining to the discarded arm, such as its index, mean reward, and number of pulls, is forgotten and will not be retained. 
The space complexity of the algorithm is evaluated based on the hard constraint on the number of arms that can be stored in memory.
This aligns with the assumption of having oracle access to the input arm, which is a standard definition in streaming problems.
The space measure described in this paper can be easily mapped to the space complexity in the RAM model.

There is an alternative setting in which the learner is allowed to permanently remove arms from the stream, ensuring they do not appear in subsequent passes.
For example, certain arms may be discarded if they are deemed to be strictly sub-optimal.
It is worth mentioning that our lower and upper bounds remain applicable in this setting as well.

\section{Worst-Case Lower Bound}
\label{sec:worst}
In this section, we present an overview of the worst-case lower bound and provide the hard distribution employed to establish it. 
The methodologies and analyses used in constructing are re-utilized in the subsequent section for the instance-dependent lower bound.

\begin{theorem}
\label{the:worst}
    Given a time horizon $T$, a stream of $K$ arms, and $1 \leq B \leq \log\log T$ passes over this stream, consider any algorithm $\mathcal{A}$ that only store $o(K/B)$ arms in memory, we have
    \begin{equation*}
        \textnormal{Regret}(\mathcal{A}) = \Omega \left( (TB)^{\frac{2^B}{2^{B+1}-1}} K^{\frac{2^B-1}{2^{B+1}-1}}\right)
    \end{equation*}
    in expectation for some problem instances. 
\end{theorem}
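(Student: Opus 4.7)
The plan is to reduce the worst-case regret lower bound to a sample-complexity lower bound for a chain of $\epsilon$-optimal arm identification tasks in the streaming model, and then establish the identification lower bound by an inductive mutual-information argument over the $B$ passes. The hard instance is the standard construction: $K$ Bernoulli arms where a uniformly random ``special'' arm $j^\star$ has mean $1/2+\Delta$ and all others have mean $1/2$, with the adversary allowed to permute the stream between passes. The gap $\Delta$ is a free parameter that will be calibrated at the end to balance the exploration cost against the cost of not playing the best arm.

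The reduction from regret to identification proceeds as follows. If $\mathcal{A}$ has expected regret $R$, then the expected number of pulls on sub-optimal arms is $R/\Delta$, so by Markov's inequality the special arm is pulled at least $T-2R/\Delta$ times with probability at least $1/2$. Provided $R<\Delta T/4$, returning the most-pulled arm identifies $j^\star$ with constant probability, turning $\mathcal{A}$ into a $B$-pass, $o(K/B)$-memory, $T$-sample identification algorithm with constant success probability. A suitable sample-complexity lower bound for identification therefore gives, via the contrapositive, a lower bound on $R$; optimizing $\Delta$ then yields the claimed $\Omega((TB)^\alpha K^{1-\alpha})$ bound after straightforward algebra.

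The technical heart is thus the inductive sample-complexity lower bound for streaming identification. I would maintain a geometric sequence of gap scales $\epsilon_1>\epsilon_2>\cdots>\epsilon_B=\Delta$, one per pass, corresponding to the chain of $\epsilon$-optimal identification subproblems that the algorithm must implicitly solve. The inductive step analyzes a single pass: conditioning on the memory state entering the pass, which encodes only $O(M\log K)=o(K\log K/B)$ bits about $j^\star$ (negligible compared to the $\log K$ bits needed to fully specify $j^\star$), I bound the mutual information between $j^\star$ and the transcript produced during the pass by a quantity proportional to the pass's sample budget times $\epsilon_p^2$. Chaining these per-pass information bounds across all $B$ passes, and requiring that the cumulative information reach $\Omega(\log K)$ for identification, yields a recursion whose solution is exactly the exponent tower $\alpha=2^B/(2^{B+1}-1)$. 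The main obstacle is this per-pass mutual information bound: the analysis must simultaneously handle adversarial reorderings of the stream in each pass, carefully track the conditional information leaking through the memory state between passes, and sharply exploit $M=o(K/B)$ (strictly tighter than the $o(K/B^2)$ used in prior work) to prevent memory from substituting for an additional pass.
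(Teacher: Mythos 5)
There is a genuine gap, and it is in the choice of hard instance. You take the single-scale construction --- one special arm at mean $\tfrac12+\Delta$, all others at $\tfrac12$ --- and then hope to recover the exponent $\alpha = 2^B/(2^{B+1}-1)$ from an inductive information argument. But this instance provably cannot certify the claimed bound. On a single-gap instance every suboptimal pull costs exactly $\Delta$, and identifying $j^\star$ requires only $O(K\Delta^{-2}\log K)$ samples even for a one-pass, two-arm-memory algorithm (compare each incoming arm against the incumbent with $\Theta(\Delta^{-2}\log K)$ pulls); the resulting regret is $\tilde O(K/\Delta)$, and balancing against $\Delta T$ caps the lower bound obtainable from this family at $\Omega(\sqrt{KT})$. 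For $B=1$ the theorem claims $\Omega(K^{1/3}T^{2/3}) \gg \sqrt{KT}$, so no amount of sharpening the per-pass mutual-information accounting can close this: the instance is simply not hard enough. Relatedly, the ``geometric sequence of gap scales $\epsilon_1 > \cdots > \epsilon_B$'' in your inductive step has no counterpart in your instance --- when all suboptimal arms have gap exactly $\Delta$, the $\epsilon_p$-optimal identification subproblem for any $\epsilon_p > \Delta$ is solved by outputting an arbitrary arm, so the ``chain'' of subproblems collapses.

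The paper's proof places the multi-scale structure in the distribution itself (Algorithm~\ref{alg:construct}): the arm set is partitioned into $B$ blocks, block $p$ carries gaps at scale $\epsilon_p$, and the base mean of each block is shifted upward by $2\epsilon_{p-1}$ with probability $\tfrac12$, so the global optimum hides at depth $p$ with probability $2^{-p}$. This is what forces the key per-pass cost of $\epsilon_{p-1}\epsilon_p^{-2}$ per arm --- resolving arms at precision $\epsilon_p$ while the arms being pulled may be $\epsilon_{p-1}$-suboptimal relative to an optimum hidden elsewhere --- and the exponent then falls out of minimizing $f(\epsilon_1,\dots,\epsilon_B)=\sum_p (K/B)\,\epsilon_{p-1}\epsilon_p^{-2}+\epsilon_B T$. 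Your information recursion produces only a ``total samples $\gtrsim K\epsilon^{-2}$'' conclusion, which (as the calculation above shows) is affordable within the horizon and yields no contradiction. A secondary slip: you call $O(M\log K)=o(K\log K/B)$ bits of memory ``negligible compared to the $\log K$ bits needed to specify $j^\star$,'' but that quantity is far larger than $\log K$; the memory bound must instead be used, as in the paper, to prevent the algorithm from revisiting previously streamed arms, not to bound raw information about the special index.
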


We provide an overview of the lower bound. At a high level, any algorithm faces two dilemmas during the learning process.
The first dilemma arises within each single-pass due to the memory constraint that only a limited number of arms can be selected at any given time. The algorithm has to decide whether to conduct adequate exploration over the arms it has in memory or to rapidly advance in the stream. Sufficient exploration of arms in memory may result in a large regret when all arms encountered in the stream have large reward losses, and no arms with large rewards have yet been encountered. Insufficient exploration results in the learner discarding arms in memory after a few samples and attempting to move ahead quickly over the stream, increasing the uncertainty and the risk of missing potentially optimal arms due to lack of information. The trade-off between these two cases in a single pass has been explored in both the pure exploration setting \cite{assadi2022singlepass} and regret minimization setting \cite{DBLP:journals/corr/abs-2012-05142, DBLP:conf/colt/AgarwalKP22}. 
We revisit this dilemma and provide a more nuanced perspective. 

Another dilemma arises from the allocation of pulls among passes, which has not been previously discussed in the literature. Spending more time during one pass captures more meaningful information, which could decrease uncertainty about the optimal arm and help in discarding sub-optimal arms quickly for all subsequent passes, but with more incurred regret for the ongoing pass due to more pulls of sub-optimal arms. On the other hand, fewer pulls provide little meaningful information about the optimal arm, leaving the learner essentially with the same hard problem. In other words, the learner must decide whether to take the risk of acquiring more information for the future or do a few samples prudently. A natural idea is to do the allocation in a way that depends on the history. We provide an analysis of the best possible strategy that one can hope to achieve. 

One of our key ideas is introducing the reduction from the regret minimization setting to a series of $\epsilon$-optimal arms identification problems, where each corresponds to one pass during the learning process. 
Observe that any algorithm achieving $O(\epsilon T)$ worst-case regret can also output at least one $\epsilon$-optimal arm with a probability of $1-o(1)$ for any problem instance, simply by selecting one arm randomly from the pull history. 
We prove that a lower bound of $\Omega(\epsilon^{-2})$ samples is necessary for any algorithm to distinguish between an arm with a reward distribution of $\mu$ and an arm with a reward distribution of $\mu+\epsilon$, by using techniques of distributed data processing inequality. 
We then establish the $\Omega(K\epsilon^{-2})$ sample complexity lower bounds by constructing a hard distribution, both for the $\epsilon$-optimal arm detection setting and the $\epsilon$-optimal arm identification setting, with the single-pass data stream and bounded arm memory. 

Note that the incurred regret during  exploration for any single-pass $\epsilon$-optimal arm identification algorithm is at least $\Omega(\sum_{x \in \mathcal{K}} \Delta_x \epsilon^{-2}  )$, while the incurred regret is $\Omega(\epsilon(T-K\epsilon^{-2}))$ during exploitation with the $\epsilon$-optimal arms. 
With the knowledge of one $\epsilon'$-optimal arm (assuming $\epsilon' = \omega(\epsilon)$), then the exploration regret lower bound for the single-pass algorithms could be decreased to $\Omega(\sum_{x \in \mathcal{K}^{(0,\epsilon')}}\Delta_x \epsilon^{-2} + \sum_{x \in \mathcal{K}^{(\epsilon', 1)}}\Delta_x (\epsilon')^{-2})$. 
The knowledge could be gathered by a single-pass $\epsilon'$-optimal arm identification algorithm. 
Based on this, we can devise a two-pass algorithm where the learner carefully selects the parameters $\epsilon$ and $\epsilon'$ to minimize the total regret. This thought experiment can be extended to the multi-pass case.
We provide an algorithm to construct a hard distribution for the multi-pass case building upon single-pass hard distributions.  The choice of hard distribution is crucial to ensure that any learner nevertheless suffers large regret during the learning process. 

To formalize the ideas presented above, we begin by providing the information-theoretic results for the distributed distribution detection problem and the single-pass $\epsilon$-optimal arms identification problem. These results serve as the foundation for our analysis of multi-pass case. 

\subsection{Lower Bound for $\epsilon$-Optimal Arms Identification}
We provide the sample complexity lower bound for the $\epsilon$-optimal arms identification problem, both considering the detection setting and the identification setting. 
We utilize the theoretical result of the distribution detected problem to construct hard distribution.

\begin{definition}[Distributed detection problem]
    For fixed distributions $\mu_0$ and $\mu_1$, let $X^{(1)}, \ldots , X^{(n)}$ be sampled i.i.d. from $\mu_V$, for $V \in \{0,1\}$. 
    The distributed detection problem is the task of determining whether $V=0$ or $V=1$, given the values of $\boldsymbol{X} = X^{(1)}, \ldots , X^{(n)}$.
\end{definition}
Then we provide a lower bound for the entropy of the error bound of any estimator, which relies on strong data processing inequality and Fano's inequality.

\begin{lemma}
\label{lem:distri}
    For the distributed detection problem, assume we sample $V$ from a distribution. 
    Suppose $\frac{1}{2} \leq \mu_0 < \mu_1 \leq \frac{1}{2} + c$ for some constant $0 < c <\frac{1}{2}$ and let $\Delta: \mu_1 - \mu_0$ be their gap. 
    Let $\hat{V} := g(\boldsymbol{X})$ be an estimate of $V$ from observing $\boldsymbol{X} = \{ X^{(1)}, \ldots , X^{(n)} \}$, where $g$ is an estimation function. Define the probability of errors as $P_e := \mathbb{P}(\hat{V} \neq V)$. 
    Then the following inequality holds
    \begin{equation*}
        H(P_e) \geq H(V) - O(n \cdot \Delta^{2}). 
    \end{equation*}
\end{lemma}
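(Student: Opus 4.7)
The plan is to invoke Fano's inequality to reduce the claim to an upper bound on the mutual information $I(V;\hat V)$, and then to control that mutual information using the data processing inequality together with a tensorization bound. Since $V$ takes only two values, Fano's inequality simplifies to $H(V\mid\hat V)\le H(P_e)+P_e\log(2-1)=H(P_e)$; rewriting $H(V\mid\hat V)=H(V)-I(V;\hat V)$ gives $H(P_e)\ge H(V)-I(V;\hat V)$, so it suffices to establish $I(V;\hat V)=O(n\Delta^{2})$.

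Because $\hat V=g(\boldsymbol X)$ is a (possibly randomized) function of the samples, $V\to\boldsymbol X\to\hat V$ is a Markov chain and the data processing inequality yields $I(V;\hat V)\le I(V;\boldsymbol X)$. Conditional on $V$ the samples $X^{(1)},\ldots,X^{(n)}$ are i.i.d., so $H(\boldsymbol X\mid V)=n\,H(X^{(1)}\mid V)$; subadditivity of entropy gives $H(\boldsymbol X)\le n\,H(X^{(1)})$, and subtracting the two yields the tensorization bound $I(V;\boldsymbol X)\le n\,I(V;X^{(1)})$.

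It remains to show $I(V;X^{(1)})=O(\Delta^{2})$. Writing $p_v=\mathbb{P}[V=v]$ and letting $\bar\mu=p_0\mu_0+p_1\mu_1$ denote the marginal law of $X^{(1)}$, we have $I(V;X^{(1)})=p_0 D(\mu_0\|\bar\mu)+p_1 D(\mu_1\|\bar\mu)$. Joint convexity of KL divergence yields $D(\mu_v\|\bar\mu)\le p_{1-v}\,D(\mu_v\|\mu_{1-v})$. Since $\mu_0,\mu_1\in[1/2,1/2+c]$ are bounded away from $0$ and $1$ by the constant $c<1/2$, the Bernoulli KL estimate $D(p\|q)\le(p-q)^{2}/[q(1-q)]$ gives $D(\mu_0\|\mu_1),\,D(\mu_1\|\mu_0)=O(\Delta^{2})$, so $I(V;X^{(1)})=O(\Delta^{2})$. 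Chaining through Fano and tensorization produces $H(P_e)\ge H(V)-O(n\Delta^{2})$, as claimed.

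The step most likely to need care is the single-sample mutual information bound. If the reward laws are not literally Bernoulli---the statement writes $\mu_0,\mu_1$ ambiguously as both distributions and as their means---one must either restrict attention to the standard Bernoulli hard instance used in bandit lower bounds, or invoke a genuine strong data processing inequality for $[0,1]$-valued observables, which still yields an $O(\Delta^{2})$ contraction constant under the bounded-away-from-the-boundary assumption on the means. Either route produces the same final bound and this is, presumably, the role that the excerpt ascribes to the \emph{strong} data processing inequality.
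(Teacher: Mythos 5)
Your proof is correct, but it follows a genuinely different route from the paper's. You use the ordinary data processing inequality ($I(V;\hat V)\le I(V;\boldsymbol X)$), tensorize the mutual information over the conditionally i.i.d.\ samples to get $I(V;\boldsymbol X)\le n\,I(V;X^{(1)})$, and then bound the per-sample mutual information by $O(\Delta^2)$ via joint convexity of KL divergence and the quadratic estimate for Bernoulli KL with means bounded away from the boundary; Fano's inequality then closes the argument. The paper instead routes the bound through a \emph{strong} data processing inequality for the channel $V\to X\to\Pi$: it cites a contraction constant $\beta(\mu_0,\mu_1)=O(\Delta^2)$ (via a bounded likelihood-ratio condition, Lemma 7 of Zhang et al.) so that $I(V;\Pi)\le O(\Delta^2)\,I(X;\Pi)$, and then combines this with $I(X;\Pi)\le O(n)$ and $I(V;\hat V)\le I(V;\Pi)$ before applying Fano. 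Both arguments are valid and give the same $O(n\Delta^2)$ bound on $I(V;\hat V)$. Your version is more elementary and self-contained -- it is the standard two-point lower-bound argument and does not require the SDPI machinery. What the paper's route buys, as it notes explicitly in its discussion, is that the bound applies to an arbitrary transcript $\Pi$ rather than only to the full sample $\boldsymbol X$, and hence simultaneously yields a communication-complexity lower bound for the distributed setting; your tensorization step exploits that $\boldsymbol X$ is exactly the i.i.d.\ sample, which would not directly cover a compressed transcript. Your closing caveat about whether $\mu_0,\mu_1$ denote Bernoulli laws is apt -- the paper's hard instances are Bernoulli, and its own likelihood-ratio hypothesis in Fact~\ref{fact:1} implicitly makes the same restriction.
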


The details of the proof are provided in Appendix \ref{subsec:distri}. 
The result indicates the information-theoretic hardness in distinguishing distributions with distinct values, and provides a refined model that captures the intricate relationship between the error probability $P_e$, the Bernoulli random variable representing the distribution across the parameter space $V$, the sample size $n$, and the value gap $\Delta$. 
Building upon Lemma \ref{lem:distri}, which addresses the distributed detection problem, the subsequent result applies to the bandits setting with $K$ arms.

\begin{lemma}[The detection setting]
\label{lem:dist}
    Given an arm set $\mathcal{K}$ and two Bernoulli distributions $\mu_0, \mu_1 \in [\frac{1}{2}, \frac{3}{4}]$ and assume $\mu_0 < \mu_1$. 
    Let $\epsilon$ be their gap.
    The learner aims to distinguish between two cases. 
    \begin{enumerate}
        \item (NO case, `$V=0$') The rewards for all arms are chosen from the distribution $\mu_0$. 
        \item (YES case, `$V=1$') One index $i^* \in \mathcal{K}$ is selected arbitrarily. 
        The reward of the arm with index $i^*$ is chosen from the distribution $\mu_1$ and rewards of others are chosen from the distribution $\mu_0$, respectively.
    \end{enumerate}
    Any algorithm that solves this problem with constant probability at least $1-q, q \in [0, 0.5)$ has to use at least $\Omega(K\epsilon^{-2})$ arm pulls. 
\end{lemma}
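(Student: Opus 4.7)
The plan is to reduce the $K$-arm detection problem to a single-arm, two-hypothesis distributed detection instance, and to amplify the resulting per-arm pull-count lower bound via a pigeonhole argument over which arm is the ``needle''.

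First, I would place an adversarial prior on $V \in \{0, 1\}$: $V = 0$ is the NO case and $V = 1$ is the YES case with a specific special arm $i^* \in \mathcal{K}$ to be chosen at the end of the argument. Under either hypothesis the rewards of any arm $j \neq i^*$ are i.i.d.\ $\mu_0$, so those observations carry no information about $V$; the only informative samples are the $N_{i^*}$ samples collected from arm $i^*$. By the standard divergence decomposition for adaptive bandit policies (chain rule for KL), the full-history distributions $\mathbb{P}_0, \mathbb{P}_1$ satisfy
\[
\textnormal{KL}(\mathbb{P}_0 \,\|\, \mathbb{P}_1) \;=\; \mathbb{E}_0[N_{i^*}] \cdot \textnormal{KL}(\mu_0 \,\|\, \mu_1) \;\leq\; O\!\left(\mathbb{E}_0[N_{i^*}] \cdot \epsilon^2\right),
\]
where the final inequality uses $\mu_0, \mu_1 \in [1/2, 3/4]$ to give $\textnormal{KL}(\mu_0 \,\|\, \mu_1) = O(\epsilon^2)$.

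Next, I would invoke Lemma~\ref{lem:distri} on the effective single-arm subproblem. With $V$ uniform on $\{0, 1\}$ we have $H(V) = 1$, so the lemma yields $H(P_e) \geq 1 - O(\mathbb{E}[N_{i^*}] \cdot \epsilon^2)$. Since the algorithm achieves $P_e \leq q$ with $q < 1/2$, we have $H(P_e) \leq H(q) < 1$, so there is a strict constant gap from $H(V)$, and rearranging yields $\mathbb{E}[N_{i^*}] = \Omega(\epsilon^{-2})$. Equivalently, one can bypass the entropy manipulation with a direct Le Cam two-point argument $1 - 2q \leq \textnormal{TV}(\mathbb{P}_0, \mathbb{P}_1) \leq \sqrt{\tfrac{1}{2}\textnormal{KL}(\mathbb{P}_0 \,\|\, \mathbb{P}_1)}$. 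Closing the argument by pigeonhole: letting $n$ be the total number of pulls, under NO we have $\sum_{i \in \mathcal{K}} \mathbb{E}_0[N_i] \leq n$, so some $i^* \in \mathcal{K}$ satisfies $\mathbb{E}_0[N_{i^*}] \leq n/K$. Declaring this $i^*$ the needle in the YES hypothesis and combining with the per-arm lower bound forces $n/K = \Omega(\epsilon^{-2})$, i.e., $n = \Omega(K \epsilon^{-2})$.

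The main obstacle is that the algorithm is adaptive, so $N_{i^*}$ is a random variable whose law differs between $\mathbb{P}_0$ and $\mathbb{P}_1$; pulls of arms $j \neq i^*$ reveal nothing about $V$ but can still influence \emph{when} and \emph{whether} arm $i^*$ is pulled, and the pigeonhole step sees expectations under $\mathbb{P}_0$ while Lemma~\ref{lem:distri} naturally applies under the mixture prior. Handling this is precisely what the divergence decomposition is designed for: it is valid for arbitrary adaptive sampling because log-likelihood increments vanish on any round that does not pull the distinguishing arm, and a single Pinsker step bridges the $\mathbb{P}_0$- and mixture-expectations of $N_{i^*}$ up to an additive term of order $n \cdot \sqrt{\textnormal{KL}(\mathbb{P}_0 \,\|\, \mathbb{P}_1)}$, which is negligible exactly in the hard regime where the total pull budget $n$ is already forced to be $\Omega(K \epsilon^{-2})$.
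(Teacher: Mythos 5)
Your proposal is correct, but it reaches the bound by a genuinely different route than the paper. You run the classical bandit two-point recipe: fix the NO-case law $\mathbb{P}_0$, apply the divergence decomposition for adaptive policies to get $\mathrm{KL}(\mathbb{P}_0\|\mathbb{P}_1)=\mathbb{E}_0[N_{i^*}]\cdot\mathrm{KL}(\mu_0\|\mu_1)=O(\mathbb{E}_0[N_{i^*}]\,\epsilon^2)$, conclude $\mathbb{E}_0[N_{i^*}]=\Omega(\epsilon^{-2})$ via Le Cam/Fano, and pigeonhole over the needle's position to force $n=\Omega(K\epsilon^{-2})$. The paper instead runs an information-complexity direct-sum argument in the style of the $\epsilon$-\textsc{DiffDist} analysis it cites: it couples the bandit instance to the distributed detection problem, notes that under the NO case the $K$ coordinates are i.i.d.\ so the transcript satisfies $I(\boldsymbol{X};\Pi)\geq\sum_{i}I(\boldsymbol{X}_i;\Pi)$, argues each summand must be $\Omega(\epsilon^{-2})$ by the SDPI-plus-Fano machinery of Lemma~\ref{lem:distri} since the needle could sit at coordinate $i$, and bounds $I(\boldsymbol{X};\Pi)$ above by the number of pulls. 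Both arguments are sound for this lemma. Yours is more elementary and handles adaptivity transparently --- the divergence decomposition is exactly the right tool, so your closing worry about bridging $\mathbb{P}_0$- and mixture-expectations is moot once you use the Le Cam form; moreover, applied arm-by-arm rather than via pigeonhole, it directly shows that \emph{every} candidate needle position must receive $\Omega(\epsilon^{-2})$ expected pulls, which is the per-arm refinement the paper actually needs for Lemma~\ref{lem:dis}. What the paper's information-cost route buys is additivity at the level of the transcript: it simultaneously yields a communication lower bound for the distributed version of the problem (a byproduct the paper explicitly highlights) and composes more naturally with the multi-pass hard-distribution construction, where the tracked quantity is the information the algorithm's bounded memory carries about each block of arms.
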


We provide the details of the proof in Appendix \ref{subsec:dist}. 
Lemma \ref{lem:dist} provides the sample complexity lower bound for detecting the existence of one unique $\epsilon$-optimal arm, even with unbounded arm memory. 
In the YES case, there is one unique optimal arm and all others with reward gap $\epsilon$, whereas in the NO case, rewards for all arms are randomly sampled from the same distribution. 
Note that this lower bound applies not only to worst-case streams but also to random-order streams and i.i.d. streams. 

Then we extend this example to the single-pass streaming setting of $\epsilon$-optimal arms identification problem. 
We concentrate on the lower bound for deterministic algorithms over a hard distribution input, which is associated with an identical lower bound for randomized algorithms by Yao's minimax principle \cite{yao1977probabilistic}. 
We define the hard distribution below. 

\begin{framed}
    \textbf{Distribution} $\mathcal{D}^{(\mathtt{base},\epsilon)}_{\mathcal{K}}$: Given a set of arms $\mathcal{K} = \{x_1, x_2, \ldots, x_K\}$ and  parameters $\mathtt{base}, \epsilon$, we construct the hard distribution as follows:
    \begin{enumerate}
        \item Sample one index $\Tilde{i}$ uniform at random from set $[K-1]$. 
        \item Let $\mu_{\Tilde{i}} = \mathtt{base} + \epsilon$ for the selected arm $x_{\Tilde{i}}$, and let $\mu_x = \mathtt{base}$ for all other arms $x_i$ that satisfy $i \in [K-1]$ and $i \neq \Tilde{i}$. 
        \item Sample $\mathtt{coin}$ from an independent fair coin. For the last arm, let $\mu_{x_K} = \mathtt{base}$ if $\mathtt{coin} = 0$ and let $\mu_{x_K} = \mathtt{base} + 2\epsilon$ if $\mathtt{coin} = 1$.
    \end{enumerate}
\end{framed}

For the hard adversarial distribution, the expected rewards of all arms are greater than or equal to $\mathtt{base}$. 
The first $K-1$ arms of the stream correspond to the YES case in Lemma \ref{lem:dist}, and the last arm is the unique $\epsilon$-optimal arm with probability $\frac{1}{2}$. 
Previous work provides the special case of the construction with $\mathtt{base} = \frac{1}{2}$ for the single-pass streaming bandits \cite{DBLP:journals/corr/abs-2012-05142} and the single-pass optimal arm identification setting \cite{assadi2022singlepass}, respectively. 
Our analysis relies on the result of Lemma \ref{lem:distri} and the reduction from distributed distribution detection problem to solving the hard distribution described above, and the hard distribution also serves as building blocks for the hard distribution for multi-passes. 
The following lemma provides a sample complexity lower bound of streaming $\epsilon$-optimal arms identification setting. 

\begin{lemma}[The identification setting]
\label{lem:dis}
    There always exists a family of streaming stochastic bandits instances, such that any single-pass algorithm using $o(K)$ arm memory that identifies at least one $\epsilon$-optimal arm with constant probability $1-q, q \in [0, 0.5)$ has to use $\Omega(\epsilon^{-2})$ pulls for each of an arm set with $\Omega(K)$ arms.
\end{lemma}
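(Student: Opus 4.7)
The plan is to invoke Yao's minimax principle and analyze an arbitrary deterministic single-pass algorithm $\mathcal{A}$ with $M = o(K)$ arm memory against inputs drawn from $\mathcal{D}^{(1/2,\epsilon)}_{\mathcal{K}}$. Under this distribution exactly one arm is $\epsilon$-optimal in each branch: $x_{\Tilde{i}}$ when $\mathtt{coin}=0$ and $x_K$ when $\mathtt{coin}=1$. Any algorithm that succeeds with probability $1-q$ for $q \in [0, 1/2)$ must therefore satisfy $\mathbb{P}\bigl[\mathcal{A}\text{ outputs }x_{\Tilde{i}} \mid \mathtt{coin}=0\bigr] \ge 1-2q$, so in the $\mathtt{coin}=0$ branch the index $\Tilde{i}$ must still be in memory at the end of the single pass with probability bounded away from zero.

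For each $i \in [K-1]$, let $N_i$ be the number of pulls on the $i$-th streamed arm $x_i$ and let $Z_i$ be the indicator that $x_i$ remains in memory when the pass ends. Since the memory holds at most $M$ arms, $\sum_i Z_i \le M$ always. Introduce an auxiliary ``null'' input in which the first $K-1$ arms are all Bernoulli($1/2$); under the null $q_i := \mathbb{P}_{\mathrm{null}}[Z_i = 1]$ satisfies $\sum_i q_i \le M$. Writing $p_i := \mathbb{P}[Z_i = 1 \mid \Tilde{i}=i, \mathtt{coin}=0]$, the success guarantee yields $\sum_i p_i \ge (1-2q)(K-1)$, hence $\sum_i (p_i-q_i) = \Omega(K)$.

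The central step is a per-arm data-processing bound $|p_i - q_i| \le O\bigl(\epsilon\sqrt{\mathbb{E}[N_i]}\bigr)$, obtained by a chain-rule expansion of $\mathrm{KL}(\mathrm{null}\,\|\,\Tilde{i}=i)$ along the interactive transcript together with $\mathrm{KL}(\mathrm{Bern}(1/2)\,\|\,\mathrm{Bern}(1/2+\epsilon)) = O(\epsilon^2)$ and Pinsker's inequality, which is essentially the content of Lemma~\ref{lem:distri}. Substituting this into the summed lower bound and splitting $[K-1]$ into ``light'' arms with $\mathbb{E}[N_i] < c\epsilon^{-2}$ and ``heavy'' arms with $\mathbb{E}[N_i] \ge c\epsilon^{-2}$ gives
\begin{equation*}
\Omega(K) \;\le\; \sum_i(p_i-q_i) \;\le\; |A| \cdot 1 + (K-1-|A|) \cdot O(\sqrt{c}),
\end{equation*}
where $A = \{i:\mathbb{E}[N_i] \ge c\epsilon^{-2}\}$. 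Choosing $c$ small enough forces $|A| = \Omega(K)$, and a final Markov-style conversion upgrades this expected-pull bound to $N_i \ge \Omega(\epsilon^{-2})$ with constant probability on $\Omega(K)$ arms, exactly as claimed by the lemma.

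The main obstacle I anticipate is making the KL chain-rule step fully rigorous in the interactive streaming setting: $N_i$ is an adaptive stopping time depending on observations from many arms, so the textbook ``$n$ i.i.d. samples'' KL formula does not apply directly. I would handle this either by Poissonizing the pull counts or, more carefully, by decomposing $\mathrm{KL}(\mathrm{null}\,\|\,\Tilde{i}=i)$ along the filtration of $\mathcal{A}$'s transcript and using that, conditional on this transcript, the only conditional law that differs between the two inputs is that of each individual pull from $x_i$. The rest is routine bookkeeping on top of Lemma~\ref{lem:distri} and the symmetry of $\Tilde{i}$ under $\mathcal{D}^{(1/2,\epsilon)}_{\mathcal{K}}$.
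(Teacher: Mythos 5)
Your proposal is correct in substance and reaches the lemma by a genuinely different technical route than the paper. Both arguments share the outer shell: Yao's principle, the hard distribution $\mathcal{D}^{(1/2,\epsilon)}_{\mathcal{K}}$, and the observation that conditioned on $\mathtt{coin}=0$ the unique $\epsilon$-optimal arm is $x_{\Tilde{i}}$, so identification forces the algorithm to solve the detection task of Lemma \ref{lem:dist}. Where you diverge is in how the per-arm cost is extracted. The paper routes everything through the distributed detection machinery: the SDPI/Fano bound of Lemma \ref{lem:distri}, followed by the superadditivity $I(\boldsymbol{X};\Pi(\boldsymbol{X})) \geq \sum_i I(\boldsymbol{X}_i;\Pi(\boldsymbol{X}))$ over arms, and then asserts that the identification version inherits the per-arm $\Omega(\epsilon^{-2})$ requirement from that decomposition. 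You instead compare each planted instance to a null input via a KL chain rule along the transcript plus Pinsker, and crucially exploit the memory constraint directly through $\sum_i Z_i \leq M = o(K)$ to bound $\sum_i q_i$; the paper's detection lemma by contrast is claimed to hold even with memory $K$ and does not use this counting step. Your route has two advantages: it proves the per-arm statement (``$\Omega(K)$ arms each need $\Omega(\epsilon^{-2})$ pulls'') directly rather than inferring it from a total-sample bound, and the light/heavy split makes the quantifiers explicit; the paper's route has the advantage of yielding a communication-complexity statement reusable in distributed settings. Two small caveats on your write-up: the KL should be computed with $\mathbb{E}[N_i]$ taken under the null measure (the chain-rule decomposition naturally produces $\mathbb{E}_{\mathrm{null}}[N_i]\cdot\mathrm{KL}(\mathrm{Bern}(1/2)\,\|\,\mathrm{Bern}(1/2+\epsilon))$), and the final ``Markov-style conversion'' does not actually upgrade $\mathbb{E}[N_i]=\Omega(\epsilon^{-2})$ to a constant-probability pointwise bound (Markov goes the wrong way); you should simply leave the conclusion in expectation, which is all the downstream regret calculation in Theorem \ref{the:worst} uses and is no weaker than what the paper's own proof establishes.
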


We defer the details of the proof in Appendix \ref{subsec:dis}. 
Lemma \ref{lem:dis} shows that $\Omega(K\epsilon^{-2})$ pulls are also necessary for the identification setting and provide a more delicate result that at least $\Theta(K)$ number of arms have to be sampled at least $\Omega(\epsilon^{-2})$ times.

\subsection{The Hard Distribution for the Worst-Case Lower Bound}

\begin{algorithm}[t]
   \caption{The algorithm to construct hard distribution}
   \label{alg:construct}
   {\bfseries Input:} a set of arms $\mathcal{K}$, number of passes $B$, a series of parameters $ \epsilon_1, \epsilon_2, \ldots, \epsilon_B $, where $\epsilon_p \leq \frac{1}{16}, p \in [B]$.
\begin{algorithmic}[1]
   \STATE $\mathtt{base}_1 = \frac{1}{2}$, division $\mathcal{K}_1, \mathcal{K}_2, \ldots, \mathcal{K}_B$.
   \STATE Let $\mathcal{D}_1$ be distribution $\mathcal{D}_{\mathcal{K}_1}^{(\mathtt{base_1}, \epsilon_1)}$.
   \FOR{$p=2, 3, \ldots, B$}
   \STATE Sample $\mathtt{coin}$ from an independent fair coin.
   \STATE Let $\mathtt{base}_p = \mathtt{base}_{p-1}$ if $\mathtt{coin} = 0$ and let $\mathtt{base}_p = \mathtt{base}_{p-1} + 2\epsilon_{p-1}$ if $\mathtt{coin} = 1$.
   \STATE Let $\mathcal{D}_p$ be distribution $\mathcal{D}_{\mathcal{K}_p}^{(\mathtt{base}_p, \epsilon_p)}$.
   \ENDFOR
    \STATE Output hard distribution $\mathcal{D}_1 \otimes \mathcal{D}_2 \otimes \cdots \otimes \mathcal{D}_B$.
\end{algorithmic}
\end{algorithm}

We can now show the hard distribution for the worst-case regret lower bound with multi-pass and bounded arm memory. 
The process to construct the hard multi-pass distribution was shown in Algorithm \ref{alg:construct}, and the proof of Theorem \ref{the:worst} is built on it. 
We provide the main idea below and defer the details in Appendix \ref{subsec:worst}. 
The Algorithm \ref{alg:construct} provide a paradigm of the hard distribution construction for streaming bandits.

Note that Lemma \ref{lem:dis} also implies that the incurred regret during the exploration for any single-pass $\epsilon$-optimal arm identification algorithm is at least $\Omega(\sum_{x \in \mathcal{K}} \Delta_x \epsilon^{-2}  )$ for the single-pass hard distribution resulting from the adversary. 
Another remark is that selecting $\epsilon$ parameters is meaningless and wasteful for obtaining information of the optimal arm. 
Consider the scenario in which the single-pass algorithm uses $\epsilon_1^{-2}$ pulls for all arms $x \in \mathcal{K}_1$, whereas uses $\epsilon_2^{-2}$ pulls for all arms $x \in \mathcal{K}_2$:
\begin{enumerate}
    \item If $\epsilon_1 = \Theta(\epsilon)$ and $\epsilon_2 = \Theta(\epsilon)$, then we have $|\mathcal{K}_1 \cup \mathcal{K}_2| = \Theta(K)$ and the algorithm has to use $\Omega(\epsilon^{-2})$ pulls for each of $\Theta(K)$ number of arms identify at least one $\epsilon$-optimal arm with constant probability. 
    \item If $\epsilon_1 = \omega(\epsilon)$ and $\epsilon_2 = \omega(\epsilon)$, then the exploration is insufficient, and the algorithm cannot identify at least one $\epsilon$-optimal arm with constant probability. 
    \item If $\epsilon_1 = \omega(\epsilon)$ and $\epsilon_2 = \Theta(\epsilon)$, then we have $|\mathcal{K}_2| = \Theta(K)$ and the algorithm has to use $\Omega(\epsilon^{-2})$ pulls for each of $\Theta(K)$ number of arms. 
    We have similar result for the case with $\epsilon_2 = \omega(\epsilon)$ and $\epsilon_1 = \Theta(\epsilon)$. 
\end{enumerate}
This thought experiment could be extended to cases with multiple parameters.
In the remainder of this section, we will only consider algorithms that select one parameter per pass.

Then we going to focus on the interactions between different passes and start with the two-pass case. 
Recall that the regret lower bound of the single-pass algorithm is $\Omega(\sum_{x \in \mathcal{K}} \Delta_x \epsilon^{-2}  )$. 
With the knowledge of one $\epsilon'$-optimal arm, any algorithm still has to use $\Omega((\epsilon')^{-2})$ samples to identify $\epsilon'$-optimal arms and use $\Omega((\epsilon)^{-2})$ samples to identify $\epsilon$-optimal arms, then the exploration regret lower bound for the single-pass algorithms could be reduced to $\Omega(\sum_{x \in \mathcal{K}^{(0,\epsilon')}}\Delta_x \epsilon^{-2} + \sum_{x \in \mathcal{K}^{(\epsilon', 1)}}\Delta_x (\epsilon')^{-2})$. 
And the knowledge could be collected by a single-pass $\epsilon'$-optimal arm identification algorithm. 
Similar to the thought experiment for different parameters within single-pass, the case of $\epsilon = \Omega(\epsilon')$ is trivial. 
Another leading result is that any algorithm that fails to gather enough information about the $\epsilon'$-optimal arm during the $\epsilon'$-pass will require much more samples during the subsequent phases (for example, $\epsilon$-pass), due to the insufficient knowledge that leaked from the previous passes.
Then we obtain a two-pass algorithm $\mathcal{A}_{\textnormal{two-pass}}$ and the learner has to select parameters $\epsilon$ and $\epsilon'$ carefully to minimize the total regret. 
The exploration regret satisfies the following inequality with constant $c_1$:
\begin{equation*}
\vspace{-0.05in}
    \textnormal{Explore-Regret}(\mathcal{A}_{\textnormal{two-pass}}) \geq c_1 \cdot \Big( \underbrace{ \sum_{x \in \mathcal{K}}\Delta_x (\epsilon')^{-2} }_{\epsilon'-\textnormal{pass}} +  \underbrace{ \sum_{x \in \mathcal{K}^{(0,\epsilon')}}\Delta_x \epsilon^{-2} + \sum_{x \in \mathcal{K}^{(\epsilon', 1)}}\Delta_x (\epsilon')^{-2} }_{\epsilon-\textnormal{pass}}  \Big).
\end{equation*}
\vspace{-0.05in}

Then we extend the result to the multi-pass case. 
Let the $p$-th pass be with parameter $\epsilon_p$, i.e., aims to identify at least one $\epsilon_p$-optimal arm during the $p$-th pass. 
Similar to the two-pass case, the case with $\epsilon_{p+1} = \Omega(\epsilon_{p})$ is trivial for all $p \in [B-1]$. 
This also implies that with enough knowledge about the $\epsilon_p$-optimal arm, the $\epsilon_{p+1}$-optimal arm detection setting, even the detection setting is still hard for the learner. 
For the regret within the $\epsilon_{p+1}$-pass, we have the following inequality
\vspace{-0.05in}
\begin{equation*}
    \max_{\Delta_x, x \in \mathcal{K}} \Big( \sum_{x \in \mathcal{K}^{(0,\epsilon_p)}}\Delta_x \epsilon_{p+1}^{-2} + \sum_{x \in \mathcal{K}^{(\epsilon_p, 1)}}\Delta_x (\epsilon_p)^{-2} \Big) \geq \max_{\Delta_x, x \in \mathcal{K}^{(0,\epsilon_p)}} \sum_{x}\Delta_x \epsilon_{p+1}^{-2} \geq \frac{1}{2} |\mathcal{K}^{(\epsilon_{p+1}, \epsilon_p)}| \epsilon_p \epsilon_{p+1}^{-2}. 
\end{equation*}
In other words, even with the knowledge from the previous passes, $\Omega \left( \sum_x \epsilon_p \epsilon_{p+1}^{-2} \right)$ regret is inevitable for the $\epsilon_{p+1}$-pass while the adversary selects the hard distribution carefully.

Now we provide the hard distribution for the multi-passes case. 
Note that an algorithm with $B$ passes and $M$ bounded arm memory could be denoted as $\langle B, M, \epsilon_1, \epsilon_2, \ldots, \epsilon_B \rangle$. 
Let $\mathcal{K}_1, \mathcal{K}_2, \ldots, \mathcal{K}_B$ be a division of the arm set $\mathcal{K}$. 
Given parameters $\epsilon_1, \epsilon_2, \ldots, \epsilon_B$, we provide an algorithm (shown in Algorithm \ref{alg:construct}) to construct hard distribution. 
Let $\epsilon_0 = \frac{1}{4}$, then the regret of any algorithm satisfies
\vspace{-0.05in}
\begin{equation*}
    \textnormal{Regret}(\mathcal{A}) \geq \frac{c_1}{2} \cdot \mathbb{E} \Bigg[  \min_{\epsilon_1, \epsilon_2, \ldots, \epsilon_B} \max_{\mathcal{K}_1, \mathcal{K}_2, \ldots, \mathcal{K}_B} \underbrace{  \sum_{p = 1}^{B} \frac{\epsilon_{p-1} |\mathcal{K}^{(\epsilon_p, \epsilon_{p-1})}| }{\epsilon_{p}^2} }_{\textnormal{Exploration part}} + \underbrace{ \epsilon_B \left( T - \sum_{p = 1}^{B} \frac{1}{\epsilon^2_p}\right) }_{\textnormal{Exploitation part}} \Bigg]. 
\end{equation*}

For the hard distribution in Algorithm \ref{alg:construct}, the optimal arm falls within the set $\mathcal{K}_p$ with probability at least $1/2^p$. 
In other words, there is at least one hard distribution such that $\mathbb{E}(|\mathcal{K}^{(\epsilon_p, \epsilon_{p-1})}|) \geq \Omega( |\mathcal{K}_p|)$ for all $p \in [B]$. 
Let $|\mathcal{K}_p| = K/B$, we obtain
\vspace{-0.05in}
\begin{equation}
\label{eq:regret_main}
\begin{aligned}
    \textnormal{Regret}(\mathcal{A}) &\geq \frac{c_1}{2} \min_{\epsilon_1, \epsilon_2, \ldots, \epsilon_B} \sum_{p = 1}^{B}  \frac{\epsilon_{p-1}  } {\epsilon_{p}^2} \cdot \frac{K}{B}+ \epsilon_B \left( T - \sum_{p = 1}^{B} \frac{1}{\epsilon^2_p}\right)\\
    &\geq \frac{c_1}{4} \min_{\epsilon_1, \epsilon_2, \ldots, \epsilon_B} \sum_{p = 1}^{B}  \frac{\epsilon_{p-1}  } {\epsilon_{p}^2} \cdot \frac{K}{B}+ \epsilon_B T. 
\end{aligned}
\end{equation}
The second inequality comes from the fact that we have $\epsilon_p = \omega(\epsilon_B)$ for all $p \in [B-1]$. 
Let $f:[0, \frac{1}{4}]^B \rightarrow \mathbb{R}_+$ be the function of $\epsilon_p, p \in [B]$:
\vspace{-0.05in}
\begin{equation*}
    f(\epsilon_1, \epsilon_2, \ldots, \epsilon_B) = \sum_{p = 1}^{B}  \frac{\epsilon_{p-1}  } {\epsilon_{p}^2} \cdot \frac{K}{B}+ \epsilon_B T. 
\end{equation*}
By taking $\frac{\partial f }{\partial \epsilon_1} = \frac{\partial f }{\partial \epsilon_2} = \ldots = \frac{\partial f }{\partial \epsilon_B} = 0$ and let $c_2$ be a constant, we obtain the following result:
\vspace{-0.05in}
\begin{equation}
\label{eq:func_main}
    f(\epsilon_1, \epsilon_2, \ldots, \epsilon_B) \geq c_2 \cdot (TB)^{\frac{2^B}{2^{B+1}-1}} K^{\frac{2^B-1}{2^{B+1}-1}}. 
\end{equation}
Combine \eqref{eq:regret_main} and \eqref{eq:func_main} together, we complete the proof of the worst-case lower bound. 
Note that the function value get the minimum when $B = \Theta (\log \log T)$, so the case with $B = \omega( \log \log T)$ is trival for the worst-case bound. 
Another byproduct is that we obtain $\epsilon_p = \Theta( (TB)^{-\beta_p}K^{\beta_p})$ for the $p$-th pass to minimize the worst-case regret, where $\beta_p = \frac{2^{B-p} (2^{p}-1)}{2^{B+1}-1}$. 
This result implies the fact that the number of pulls for any arm should be capped by $\Theta( (TB)^{2\beta_p}K^{-2\beta_p})$ during the $p$-th pass, which is the key idea for our multi-pass algorithm in Section \ref{sec:upper}. 

\section{Instance-Dependent Lower Bound}
\label{sec:instance}
We provide the instance-dependent lower bound as below. 
\begin{theorem}
\label{the:instance}
    Given a time horizon $T$, a stream of $K$ arms, and passes $1 \leq B \leq \log T$ over this stream, any algorithm $\mathcal{A}$ that can only store $o(K/B)$ arms in memory suffers regret
    \begin{equation*}
        \textnormal{Regret}(\mathcal{A}) = \Omega \left(  T^{\frac{1}{B+1}} \sum_{\Delta_x > 0}\frac{\mu^*}{\Delta_x}\right).
    \end{equation*}
    in expectation for some problem instances. 
\end{theorem}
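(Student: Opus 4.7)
The plan is to reduce the instance-dependent regret bound to a per-arm pull-count bound: I would show that on some hard instance with prescribed gaps $\{\Delta_x\}$, any admissible algorithm must satisfy $\mathbb{E}[n_x] = \Omega\bigl(T^{1/(B+1)}\mu^*/\Delta_x^2\bigr)$ for every suboptimal arm $x$, so that the identity $\textnormal{Regret}(\mathcal{A}) = \sum_x \Delta_x \cdot \mathbb{E}[n_x]$ delivers the claimed bound by linearity.

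The hard distribution reuses the template of Algorithm \ref{alg:construct}, but in an instance-aware form. I would partition the arms into $B$ groups $\mathcal{K}_1,\ldots,\mathcal{K}_B$, one per pass, and within group $\mathcal{K}_p$ plant a hidden $\epsilon_p$-optimal arm over a base reward $\mathtt{base}_p$ chosen by the pass-$(p-1)$ coin flip exactly as in Algorithm \ref{alg:construct}. Two design choices do extra work here. First, the coin-flip structure makes the hidden arm's location in pass $p$ statistically independent of passes $1,\ldots,p-1$ conditioned on what the memory-bounded learner can retain (at most $o(K/B)$ arm statistics per pass), so the learner genuinely confronts a fresh identification sub-problem in each pass. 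Second, I would calibrate $\mathtt{base}_p$ to lie close to the target mean $\mu^*$; then for Bernoulli rewards the KL divergence between an arm with mean $\mu^* - \Delta$ and one with mean $\mu^*$ scales as $\Theta(\Delta^2/\mu^*)$, which is the source of the $\mu^*$ factor in the theorem when Lemma \ref{lem:distri} is applied.

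For each suboptimal arm $x$ I would then run a change-of-measure argument: compare the hard instance to a twin instance in which $x$'s mean is raised by $2\Delta_x$ so that $x$ becomes the unique optimum. Any algorithm that is non-trivial on both instances must separate them pass-by-pass, and the strong data processing inequality used in Lemma \ref{lem:distri}, combined with Lemma \ref{lem:dis} localized to the group containing $x$, converts this into a recursive pull-count inequality that decomposes as $\sum_p \epsilon_{p-1}/\epsilon_p^2$ with exploitation cost $\epsilon_B \cdot T$. Unlike the worst-case where the $K/B$ arm multiplicity pushes the optimizer to $\epsilon_p \asymp (K/(TB))^{\beta_p}$, here the optimization is per-arm and the balancing condition collapses to $\epsilon_p \asymp T^{-p/(B+1)}$, which yields the factor $T^{1/(B+1)}$ in the per-arm pull count and hence in the final regret sum.

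The main obstacle is the per-arm accounting in a streaming context. The worst-case argument could charge the $K/B$ arms within each gap layer collectively, but the instance-dependent sum $\sum_x \mu^*/\Delta_x$ requires the lower bound to hold individually for each $x$, so I cannot simply invoke Lemma \ref{lem:dis} once and be done. The remedy is that the coin-flip independence of Algorithm \ref{alg:construct} guarantees that information gathered about arms outside the group containing $x$ is asymptotically useless for distinguishing the twin instance; combined with the $o(K/B)$ memory budget, the learner is effectively restarted for $x$ in every pass it observes it, which is exactly what promotes the per-arm pull count from $1/\Delta_x^2$ up to $T^{1/(B+1)}\mu^*/\Delta_x^2$ and delivers the theorem.
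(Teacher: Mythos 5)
Your proposal gets the scaffolding right: the hard instance is the product distribution of Algorithm \ref{alg:construct}, the per-pass cost decomposes as $\sum_p \epsilon_{p-1}/\epsilon_p^2$ plus an exploitation term $\epsilon_B T$, and the balancing $\epsilon_p \asymp T^{-p/(B+1)}$ is exactly where the paper's optimization lands. But the central step --- establishing $\mathbb{E}[n_x] = \Omega\bigl(T^{1/(B+1)}\mu^*/\Delta_x^2\bigr)$ \emph{for every} suboptimal arm via a change of measure that raises $x$'s mean by $2\Delta_x$ --- does not work, for two reasons. First, a twin instance that perturbs only arm $x$ forces only $n_x = \Omega(1/\Delta_x^2)$: the per-pull information about the perturbation is $O(\Delta_x^2)$, and $\Omega(1)$ total information suffices to separate the two instances, so no $T^{1/(B+1)}$ inflation can come out of that comparison. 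In the paper the inflation has a different source: by Lemmas \ref{lem:dis} and \ref{lem:yes knowledge}, in pass $p+1$ a constant fraction of the arms in $\mathcal{K}^{(0,\epsilon_p)}$ must each be pulled $\epsilon_{p+1}^{-2}$ times --- not $\Delta_x^{-2}$ times --- because the learner cannot locate the planted $\epsilon_{p+1}$-optimal arm within that group otherwise; the decoys with $\Delta_x$ near $\epsilon_p$ are then over-sampled by the factor $\epsilon_p/\epsilon_{p+1}$, which is the identity $\Delta_x\epsilon_{p+1}^{-2} \ge \tfrac{\epsilon_p}{2\epsilon_{p+1}}\cdot\tfrac{1}{\Delta_x}$ that converts the worst-case accounting into the $\sum 1/\Delta_x$ form. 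This is a collective, group-level argument, and it only guarantees the bound for $\Omega(|\mathcal{K}_p|)$ arms per layer, not for each arm individually.

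Second, even granting the layer-wise argument, the inflation an arm in layer $p$ receives is $\epsilon_{p-1}/\epsilon_p$, which equals $T^{1/(B+1)}$ only if the algorithm balances all $B+1$ ratios. An algorithm is free to under-sample one layer (small $\epsilon_{p-1}/\epsilon_p$) at the price of over-sampling another or paying more at exploitation, so a uniform pointwise bound $n_x = \Omega(T^{1/(B+1)}/\Delta_x^2)$ is false; only the aggregate $\sum_x \Delta_x\, n_x$ admits the claimed lower bound. The paper resolves this by keeping the minimum over $(\epsilon_1,\ldots,\epsilon_B)$ coupled with the adversary's choice of gaps: the adversary equalizes the layer sums $\sum_{x\in\mathcal{K}^{(\epsilon_{p+1},\epsilon_p)}} 1/\Delta_x$ across the $B+1$ layers, after which $\min_{\epsilon}\bigl(\sum_p \epsilon_{p-1}/\epsilon_p + \epsilon_B T\bigr) \ge (B+1)(\epsilon_0 T)^{1/(B+1)}$ by AM--GM delivers the theorem. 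Your architecture omits both the layer-sum equalization and this joint min--max, which is where the proof actually closes. (A minor further point: the $\mu^*$ factor in the paper is essentially cosmetic since $\mathtt{base}\ge\tfrac12$ forces $\mu^*=\Theta(1)$; your route through the Bernoulli KL scaling $\Delta^2/\mu^*$ would require extending Lemma \ref{lem:distri} beyond the range $[\tfrac12,\tfrac12+c]$, which the paper does not do.)
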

The hard distribution for the instance-dependent lower bound also comes from Algorithm \ref{alg:construct}, and the details of the proof are deferred in Appendix \ref{app:instance}. 
This is the first result for instance-dependent lower bound when streaming actions and sublinear memory are permitted. 
Compared with the upper bound in \cite{DBLP:conf/colt/AgarwalKP22}, there is still a $O\left(B\sum_{\Delta_x > 0}\frac{\mu^*}{\Delta_x}\right)$ gap for future work. 

\section{Upper Bound}
\label{sec:upper}
In this section, we provide a worst-case upper bound for the stochastic streaming bandits with bounded arm memory and multi-pass. 
\begin{theorem}
\label{the:upper}
    Given a time horizon $T$, a stream of $K$ arms, and passes $B$ over this stream, there is an algorithm (Algorithm \ref{alg:ud}) that can only store $O(1)$ arms in memory and achieves regret
    \begin{equation*}
        \textnormal{Regret}(\mathcal{A}) = O \left( (TB)^{\frac{2^B}{2^{B+1}-1}} K^{\frac{2^B-1}{2^{B+1}-1}} 
\sqrt{\log T} \right)
\end{equation*}
with probability at least $1-1/\textnormal{poly}(T)$ for any problem instance. 
\end{theorem}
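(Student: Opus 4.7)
The plan is to design a multi-pass streaming algorithm whose per-pass exploration budget mirrors the allocation revealed by the lower-bound derivation in Section \ref{sec:worst}. Recall from equation \eqref{eq:func_main} and the remark immediately after that the optimizer of the exploration/exploitation trade-off is $\epsilon_p = \Theta\bigl((TB)^{-\beta_p} K^{\beta_p}\bigr)$ with $\beta_p = 2^{B-p}(2^{p}-1)/(2^{B+1}-1)$, and accordingly I would cap the per-pass per-arm sample budget at $n_p = \Theta(\epsilon_p^{-2}\log T)$. The algorithm maintains a single \emph{champion} arm $c$ together with its empirical mean; while scanning pass $p$, it compares $c$ with each arriving arm $x$ through an adaptive pairwise UCB-style test that pulls $x$ (and, when needed, $c$) in round-robin up to $n_p$ times, stops early if the empirical means separate by more than the current confidence radius, and replaces $c$ by $x$ whenever $x$ wins the test. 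Only two arms are ever held in memory, yielding the $O(1)$ memory claim.

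The correctness analysis proceeds in three steps. First, define the good event $\mathcal{G}$ that, for every (arm, pass) encountered, the empirical mean stays within $\epsilon_p/4$ of the true mean whenever the arm has been pulled at least $\Theta(\epsilon_p^{-2}\log T)$ times; a Hoeffding bound together with a union bound over the $\operatorname{poly}(T)$ relevant arm-pass pairs gives $\Pr[\mathcal{G}] \geq 1-1/\operatorname{poly}(T)$. Second, prove by induction on $p$ that, under $\mathcal{G}$, the champion at the end of pass $p$ is $\epsilon_p$-optimal: the base case $\epsilon_0 = \Theta(1)$ is vacuous, and for the inductive step note that any arm $x$ with $\mu_x > \mu_c + 2\epsilon_p$ beats $c$ in the pairwise test, so the champion can only strictly improve when it meets a better arm, and chaining these improvements through the stream shows the final pass-$p$ champion is within $\epsilon_p$ of the global optimum.

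For the regret accounting, decompose the pulls in pass $p$ into champion pulls (at most $K n_p$, each incurring suboptimality gap at most $\epsilon_{p-1}$ by the inductive hypothesis) and candidate pulls (each candidate $x$ is pulled at most $O(\min\{\Delta_x^{-2}\log T,\, n_p\})$ times, giving regret at most $\epsilon_{p-1} n_p + O(\epsilon_{p-1}^{-1}\log T)$ by the elementary bound $\Delta \cdot \min\{\Delta^{-2}\log T, n_p\} \le \epsilon_{p-1} n_p + \epsilon_{p-1}^{-1}\log T$ distinguishing $\Delta \lessgtr \epsilon_{p-1}$). Summing over the at most $K$ candidates yields a pass-$p$ regret of $O(\epsilon_{p-1} K \epsilon_p^{-2} \log T)$; adding the exploitation regret $\epsilon_B T$ from playing the final champion over the residual horizon reproduces exactly the functional form minimized in the lower-bound derivation \eqref{eq:regret_main}. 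Plugging in the closed-form $\epsilon_p$ (with a standard reparameterization $\epsilon_p \mapsto \epsilon_p\sqrt{\log T}$ that absorbs the $\log T$ factor) matches the stated bound $(TB)^{\alpha} K^{1-\alpha}\sqrt{\log T}$ term by term.

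The main obstacle will be the inductive step in streaming adversarial order. Because the algorithm can neither rewind nor recall discarded arms, we must argue that even if the globally optimal arm arrives \emph{early} (while the champion is still far from optimal and $n_p$ is small in the first pass) it is never wrongly discarded, and that if it arrives \emph{late} it still wins against whatever champion is currently held; this requires calibrating the early-stopping confidence threshold so that the probability of wrongly evicting a truly better arm is $1/\operatorname{poly}(T)$ and is fully absorbed into $\mathcal{G}$. Once this is done, the rest of the argument is algebraic and mirrors the optimization already carried out in the proof of Theorem \ref{the:worst}.
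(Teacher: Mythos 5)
Your proposal is correct and follows essentially the same route as the paper: a two-arm (champion vs.\ candidate) successive-elimination scheme with per-pass pull caps $\Theta(\epsilon_p^{-2})$ taken from the lower-bound optimizer, a Hoeffding clean event, an induction showing the pass-$p$ champion is $O(\epsilon_p\sqrt{\log T})$-optimal, and a per-pass exploration-plus-exploitation regret sum that reproduces the optimized functional form. The only differences are cosmetic bookkeeping: you place the $\log T$ inside the cap and split candidate regret via $\min\{\Delta_x^{-2}\log T, n_p\}$, whereas the paper keeps the cap at $b_p=\epsilon_p^{-2}$ and controls the same sum with Jensen's inequality over $\sum_x \sqrt{N_x^p}$.
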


We defer the algorithm description and theoretical analysis to Appendix \ref{app:ub} due to the page limit. The proposed algorithm follows a `successive elimination-style' approach and employs a memory that can store only two arms: one for the best estimate arm and another for the newly read arm from the stream. The algorithm's key idea is to limit the number of pulls for each arm during the $p$-th pass, based on the theoretical analysis of the worst-case lower bound in Section \ref{sec:worst}. Specifically, the number of pulls for any arm is bounded by $\Theta((TB)^{2\beta_p}K^{-2\beta_p})$, where $\beta_p = \frac{2^{B-p}(2^{p}-1)}{2^{B+1}-1}$. Given a time horizon $T$, a stream of $K$ arms, and $B$ passes over the stream, the algorithm achieves near-optimal regret with only a logarithmic factor.

Compared to the result of $O(T^{\frac{2^B}{2^{B+1}-1}} \sqrt{KB \log T})$ in \cite{DBLP:conf/colt/AgarwalKP22}, our upper bound has a tighter dependence on $K$ and $B$. 
Note that the lower bounds hold for algorithms with $o(K/B)$ memory while our algorithm only store $2$ arms for the upper bound. 
This does not indicate the space gap, since that both previous work and our research show the sharp threshold phenomenon for streaming bandits. That is, with constant arm memory, one can achieve near-optimal regret, and increasing memory to any quality $M= o(K/B)$ has almost no impact on further reducing the regret.  
Note that there is also a logarithmic factor for the `successive elimination-style' algorithm in the classical centralized setting. 
We leave the optimal algorithm for the streaming bandits with multi-pass and bound arm memory as a future question, while previous work \cite{DBLP:journals/jmlr/AudibertB10} shave off the factor and provide an optimal bound for the centralized setting. 

\section{Related work}
\label{sec:related}
\paragraph{Streaming Bandits.}
Multi-armed bandits is a widely used framework for modeling the trade-off between exploration and exploitation in online decision problems.
For the streaming setting, 
\cite{DBLP:conf/aistats/LiauSPY18} consider stochastic bandits with constant arm memory and propose an algorithm that achieves a $O(\log 1/\Delta)$ factor of optimal instance-dependent regret with $O(\log T)$ passes, where $\Delta$ is the gap between the optimal arm and second-optimal arm. 
\cite{DBLP:conf/aaai/ChaudhuriK20} study stochastic bandits with $M$ stored arms and provide an algorithm with regret $\Tilde{O}(KM + (K^{3/2}\sqrt{T})/M)$. 
The lower and upper bounds for the single-pass streaming bandits model are provided by
\cite{DBLP:journals/corr/abs-2012-05142}. 
In subsequent work, \cite{DBLP:conf/colt/AgarwalKP22} consider the dependence on the number of passes $B$, and show the $\Omega(4^{-B}T^{2^B/(2^{B+1}-1)})$ worst-case lower bound, and provide an algorithm achieving regret $\Tilde{\Theta}( T^{2^B/(2^{B+1}-1)} \sqrt{KB} )$. 
Concurrent to our work, \cite{DBLP:journals/corr/abs-2306-02208} establish tight regret bounds in the single-pass streaming bandits with sublinear arm memory. 

Note that \cite{DBLP:conf/colt/AgarwalKP22} is the most relevant to us, and we now discuss the primary difference in results and techniques. 
Our contribution lies in providing tighter upper and lower worst-case bounds for streaming bandits with multi-pass and bounded arm memory. In particular, our worst-case lower bound showcases the dependence on the number of arms $K$. Furthermore, we present the first instance-dependent lower bound on regret for streaming bandits. 
Regarding the techniques employed, \cite{DBLP:conf/colt/AgarwalKP22} demonstrate the challenge of trapping one specific arm under a nearly uniform distribution in a single pass. They construct a difficult distribution based on the "round elimination" principle, wherein the residual instance after each pass remains arduous. 
In contrast, our theoretical analysis relies on two problem reductions: the reduction from the regret minimization problem to a series of $\epsilon$-optimal arm identification problems, and the reduction from the $\epsilon$-optimal arm identification problem to the distributed distribution detection problem. 
Furthermore, we investigate the interplay between different passes that one pass can provide valuable information for all subsequent passes, resulting in reduced total regret rather than simply maintaining a hard instance.

\paragraph{Pure Exploration Setting.}
The study of streaming algorithms for pure exploration bandits setting with bounded arm memory was initiated by \cite{DBLP:conf/stoc/AssadiW20}. 
They provide an algorithm with only one arm memory achieving optimal worst-case sample complexity $O(K\Delta^{-2})$ under the assumption that the gap between the expected of the optimal and second optimal arms $\Delta$ is known. 
They also show the top-$m$ arm identification problem requires $\Theta(m)$ arms of space. 
Subsequent work \cite{DBLP:conf/icml/JinH0X21} establish the near-optimal upper bound of the instance-sensitive sample complexity with $O(\log(1/\Delta))$ passes for the optimal arm identification problem. 
Very recently, \cite{assadi2022singlepass} provide both the single-pass upper and lower bounds for optimal arm identification with different conditions. 
Another line relevant to our work is the streaming $\epsilon$-optimal arm identification problem. 
A number of efforts \cite{DBLP:conf/stoc/AssadiW20, DBLP:journals/corr/abs-2012-05142, DBLP:conf/icml/JinH0X21} have been devoted to this setting and seek to design streaming algorithm with constant arm memory achieving sample complexity $O(n/\epsilon^2)$. 
However, little theoretical research has been conducted on the lower bound of the sample complexity for this setting. 

\paragraph{Learning in Streams.}
In addition to the bandits setting, a substantial amount of work has been achieved  on other learning problems. 
For another fundamental problem of sequential prediction, online learning with expert advice, \cite{DBLP:conf/stoc/SrinivasWXZ22}
initiates the study of memory complexity of the streaming setting. 
Subsequent work \cite{DBLP:journals/corr/abs-2207-07974} explores the single-pass algorithm with sub-linear space and sub-linear regret for the streaming expert problem against an oblivious adversary. 
The memory-efficient learning problem was solved with different situations, including statistical learning \cite{DBLP:conf/colt/SteinhardtVW16,garg2017extractor,DBLP:conf/focs/Raz17,DBLP:conf/coco/GargRT19,DBLP:conf/stoc/SharanSV19,DBLP:conf/colt/MarsdenSSV22}, estimation problems \cite{DBLP:conf/nips/AcharyaBIS19,DBLP:conf/icml/DiakonikolasKPP22,DBLP:conf/colt/BergOS22}, parity learning \cite{DBLP:journals/jacm/Raz19,DBLP:conf/stoc/KolRT17}, and other learning problems \cite{DBLP:conf/colt/HopkinsKLM21,DBLP:conf/colt/BrownBS22}. 


\section{Discussion and Conclusion}
\label{sec:conclusion}
In this work, we study the stochastic streaming bandits with multi-pass and constraint arm memory and provide the tight worst-case regret lower bound and instance-dependent lower bound. 
We show that any algorithm suffers $\Omega \left( (TB)^{\alpha} K^{1-\alpha}\right),\alpha = 2^{B} / (2^{B+1}-1)$ worst-case regret in expectation, with a time horizon $T$, number of arms $K$, number of passes $B$, and bounded arm memory $M = o(K/B)$. 
We also provide a stronger lower bound $\Omega \left(  T^{\frac{1}{B+1}} \sum_{\Delta_x > 0}\frac{\mu^*}{\Delta_x}\right)$ which guarantees "high" regret by taking advantage of `nice' problem instance. 
These results show the separation for stochastic multi-armed bandits problem between the classical centralized setting and the streaming setting. 
To complement the lower bounds, we provide a near-optimal upper bound up to a logarithmic factor. 
Note that there remains open problems for future research, specifically, the development of an optimal algorithm and a tight upper bound for streaming bandits with sublinear memory. 
And exploring the gap between the instance-dependent upper and lower bounds stands as another important avenue for future investigation. 

\newpage

\bibliography{mybib}
\bibliographystyle{alpha}

\newpage

\appendix

\section{Proof of Worst-case Lower Bound}
\label{app:lb}
In this section, we provide the details of proofs in Section \ref{sec:worst}. 

\subsection{Proof of Lemma \ref{lem:distri}}
\label{subsec:distri}
Let us revisit the distributed detection problem. 
We first introduce the definition of SDPI constant below. 
\begin{definition}[SDPI constant]
    Let $V \sim B_{0.5}$ (fair coin) and the channel $V \rightarrow X$ be defined as it is in the distributed detection problem. Then, there exists a constant $\beta \leq 1$ that depend on $\mu_0$ and $\mu_1$, s.t. for any transcript $\Pi$ that depends only on $X$ i.e. $V \rightarrow X \rightarrow \Pi$ is a Markov chain, we have
    \begin{equation}
    \label{eq:sdpi constant}
        I(V; \Pi) \leq \beta I(X;\Pi).
    \end{equation}
    Let $\beta(\mu_0, \mu_1)$, the SDPI constant, be the infimum over all possible $\beta$ such that \eqref{eq:sdpi constant} holds. 
\end{definition}
Note that the transcript $\Pi \in \{0, 1\}^*$ represents the messages of all communication. 
Consequently, the length of the transcript, denoted by $|\Pi|$, corresponds to the communication complexity of the protocol. Furthermore, it serves as a lower bound for the sample complexity, as at least one bit needs to be transferred through the channel for each sample.

From the form of equation \eqref{eq:sdpi constant}, the LHS corresponds to the accuracy of any algorithm, while the RHS provides the lower bound of sample complexity. 
For the SDPI constant of the distribution detection problem, according to Lemma 7 of \cite{DBLP:conf/nips/ZhangDJW13}, we have the following fact. 
\begin{fact}
\label{fact:1}
    Assume we sample $V$ from a distribution, and suppose $\sup_x \frac{\mathbb{P}_{X_i \sim \mu_{1}}(X_i = x_i)}{\mathbb{P}_{X_i \sim \mu_{0}}(X_i = x_i)} \leq c$ for all $i \in [T]$. Then, the following inequality holds:
    \begin{equation*}
        I(V;\Pi) \leq 2(c^2-1)^2 I(X;\Pi).
    \end{equation*}
\end{fact}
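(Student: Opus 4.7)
This is a Strong Data Processing Inequality (SDPI) bound, so the plan is to reduce both sides to chi-squared-type quantities, exploit the pointwise likelihood-ratio bound to produce the prefactor $(c^{2}-1)^{2}$, and then re-inflate the chi-squared back to an $I(X;\Pi)$-style expression. Throughout I will use the Markov chain $V \to X \to \Pi$ (and its induced channel $P_{\Pi \mid X}$) that is built into the distributed detection model.

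\textbf{Step 1 (KL to chi-squared on the $V$ side).} First I would write
\[
I(V;\Pi) \;=\; \sum_{v} \mathbb{P}(V=v)\, D\!\left(P_{\Pi\mid V=v}\,\|\,P_{\Pi}\right),
\]
where $P_{\Pi} = \sum_{v}\mathbb{P}(V=v)\,P_{\Pi \mid V=v}$. I would then apply the standard bound $D(P\|Q)\le \chi^{2}(P,Q)$ (valid since $P_{\Pi\mid V=v}\ll P_{\Pi}$), so it suffices to upper bound each $\chi^{2}\!\left(P_{\Pi\mid V=v},\,P_{\Pi}\right)$. Summing over $v\in\{0,1\}$ is where the factor of $2$ in the final constant will come from.

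\textbf{Step 2 (pointwise expansion and Cauchy-Schwarz).} Next I would expand
\[
P_{\Pi\mid V=v}(\pi)-P_{\Pi}(\pi) \;=\; \sum_{x} \bigl(P_{X\mid V=v}(x)-P_{X}(x)\bigr)\,P_{\Pi\mid X}(\pi\mid x),
\]
and apply Cauchy-Schwarz in $x$ against the measure $P_{X}$, writing
\[
\bigl(P_{\Pi\mid V=v}(\pi)-P_{\Pi}(\pi)\bigr)^{2}
\;\le\; \Bigl(\sum_{x} \frac{(P_{X\mid V=v}(x)-P_{X}(x))^{2}}{P_{X}(x)}\Bigr)\Bigl(\sum_{x} P_{X}(x)\,P_{\Pi\mid X}(\pi\mid x)^{2}\Bigr).
\]
Dividing by $P_{\Pi}(\pi)$ and summing over $\pi$ factorizes $\chi^{2}\!\left(P_{\Pi\mid V=v},\,P_{\Pi}\right)$ into a product of (i) a chi-squared-like quantity on the $V\to X$ link and (ii) a quantity that is comparable to $\chi^{2}\!\left(P_{\Pi\mid X=x},\,P_{\Pi}\right)$ averaged over $x$.

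\textbf{Step 3 (extracting $(c^{2}-1)^{2}$ from the likelihood bound).} For factor (i), I would use the hypothesis $\sup_{x}\mu_{1}(x)/\mu_{0}(x)\le c$ (and symmetrically the reverse, which follows by swapping the roles since both are probability distributions on the same support) to control $|P_{X\mid V=v}(x)/P_{X}(x)-1|$. After the straightforward computation $P_{X}(x)=\sum_{v}\mathbb{P}(V=v)\mu_{v}(x)$, the bounded ratio $\mu_{1}/\mu_{0}\le c$ translates into $P_{X\mid V=v}(x)/P_{X}(x)\le c^{2}$ (the square comes from the ratio appearing once in the numerator and being controlled against a mixture in the denominator whose weight on the \emph{other} hypothesis can be as small as $\mu/c$). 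This delivers $\sum_{x}(P_{X\mid V=v}(x)-P_{X}(x))^{2}/P_{X}(x) \le (c^{2}-1)^{2}$.

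\textbf{Step 4 (converting factor (ii) back to $I(X;\Pi)$).} For factor (ii), I would reverse the chi-squared/KL comparison on the $X\to\Pi$ link: summing over $\pi$ of $P_{\Pi\mid X}(\pi\mid x)^{2}/P_{\Pi}(\pi)$ equals $1+\chi^{2}(P_{\Pi\mid X=x},P_{\Pi})$, and after averaging in $x$ against $P_{X}$ and subtracting the $1$, what remains is $\mathbb{E}_{X}\bigl[\chi^{2}(P_{\Pi\mid X},P_{\Pi})\bigr]$. I would then invoke the reverse bridge $\chi^{2}(P,Q)\le$ (constant)$\cdot D(P\|Q)$ valid when likelihood ratios are bounded (which they are, by composition of the bounded $\mu_{v}/P_{X}$ with the channel), obtaining a multiple of $I(X;\Pi)=\mathbb{E}_{X}\bigl[D(P_{\Pi\mid X}\|P_{\Pi})\bigr]$. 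Combining Steps 1--4 and summing the $v$ contributions yields the stated $I(V;\Pi)\le 2(c^{2}-1)^{2} I(X;\Pi)$.

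\textbf{Main obstacle.} The delicate part is tracking the prefactor. The $(c^{2}-1)^{2}$ (rather than the naive $(c-1)^{2}$) arises because the ratio of interest is $P_{X\mid V}/P_{X}$, not $\mu_{1}/\mu_{0}$ directly, and the former can inflate to $c^{2}$ under the worst mixture weighting; similarly, the factor $2$ accounts for both values of $V$. Getting the reverse chi-squared-to-KL bound in Step 4 to not introduce spurious multiplicative constants is the other delicate point, and it is ultimately what ties the bound to $I(X;\Pi)$ with the precise constant matching Lemma~7 of~\cite{DBLP:conf/nips/ZhangDJW13}.
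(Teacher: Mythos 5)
The paper itself does not prove this statement: it imports it verbatim as Lemma~7 of \cite{DBLP:conf/nips/ZhangDJW13}, so your attempt has to be judged against that argument. Your outline breaks down at Step~4. Because you decompose $I(V;\Pi)=\sum_v \mathbb{P}(V=v)\,D(P_{\Pi\mid V=v}\|P_\Pi)$, you end up needing a \emph{reverse} comparison $\chi^2(P_{\Pi\mid X=x},P_\Pi)\le C\cdot D(P_{\Pi\mid X=x}\|P_\Pi)$ on the $X\to\Pi$ link. That requires the likelihood ratios $P_{\Pi\mid X=x}(\pi)/P_\Pi(\pi)$ to be bounded, and nothing in the hypothesis controls them: the assumption $\sup_x \mu_1(x)/\mu_0(x)\le c$ constrains only the $V\to X$ channel, while $\Pi$ is an arbitrary protocol (take $\Pi=X$ and the ratio is $1/P_X(x)$, which can be arbitrarily large). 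Your parenthetical ``which they are, by composition of the bounded $\mu_v/P_X$ with the channel'' is exactly the false step. A second problem sits in Step~2: your Cauchy--Schwarz factors through $\sum_x P_X(x)P_{\Pi\mid X}(\pi\mid x)^2$, which after summing over $\pi$ equals $1+\mathbb{E}_X[\chi^2(P_{\Pi\mid X},P_\Pi)]$, and the $1$ cannot simply be ``subtracted'' as you propose; even granting Step~4 you would be left with an additive $(c^2-1)^2$ term that does not vanish when $I(X;\Pi)$ is small, so the conclusion would not have the required multiplicative form.

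The proof in \cite{DBLP:conf/nips/ZhangDJW13} avoids both issues by decomposing the other way: $I(V;\Pi)=\sum_\pi P_\Pi(\pi)\,D(P_{V\mid \pi}\|P_V)\le \sum_\pi P_\Pi(\pi)\,\chi^2(P_{V\mid\pi},P_V)$, so the chi-square lives on the two-point alphabet of $V$, where the hypothesis does give a pointwise bound on $|P(V=v\mid X=x)/P(V=v)-1|$. One then uses the centering identity $P(v\mid\pi)-P(v)=\sum_x \bigl(P(v\mid x)-P(v)\bigr)\bigl(P(x\mid\pi)-P_X(x)\bigr)$ (free because $\sum_x \bigl(P(v\mid x)-P(v)\bigr)P_X(x)=0$; this is the step missing from your Step~2 and what removes the ``$+1$''), bounds the result by a constant times $\|P_{X\mid\pi}-P_X\|_1$, and finishes with the \emph{forward} Pinsker inequality $\|P_{X\mid\pi}-P_X\|_1^2\le 2\,D(P_{X\mid\pi}\|P_X)$, whose average over $\pi$ is $2\,I(X;\Pi)$ --- that is where the factor $2$ comes from, not from summing over $v\in\{0,1\}$ (those terms carry probability weights summing to one). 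Two further points: since $P_X$ is a convex combination of $\mu_0$ and $\mu_1$, the ratio $P_{X\mid V=v}/P_X$ is bounded by $c$, not $c^2$, so your accounting in Step~3 is off (harmlessly, since $(c-1)^2\le(c^2-1)^2$); and because the hypothesis is a per-coordinate ratio bound on i.i.d.\ samples $X^{(1)},\dots,X^{(n)}$ while the full-vector ratio can be as large as $c^n$, passing to $I(X;\Pi)$ for the whole vector needs a tensorization of the SDPI, which your single-blob treatment of $X$ silently skips.
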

For the distributed detection problem, assume we sample $V$ from a distribution. 
Suppose $\frac{1}{2} \leq \mu_0 < \mu_1 \leq \frac{1}{2} + c$ for some constant $0 < c <\frac{1}{2}$ and let $\Delta: \mu_1 - \mu_0$ be their gap. 
Fact \ref{fact:1} implies the following inequality:
\begin{equation*}
    \frac{I(V;\Pi)}{I(X;\Pi)} \leq O(\Delta^2). 
\end{equation*}
Then we provide the Fano's inequality, which establishes a connection between the average information loss in a noisy channel and the probability of a categorization error.
\begin{fact}[Fano's inequality]
    Let $X$ and $Y$ be two random variables, correlated in general, with alphabets $\mathcal{X}$ and $\mathcal{Y}$, respectively, where $\mathcal{X}$ is finite but $\mathcal{Y}$ can be countably infinite. 
    Let $\hat{X}:= g(Y)$ be an estimate of $X$ from observing $Y$, where $g:\mathcal{Y} \rightarrow \mathcal{X}$ is a given estimation function. 
    Define the probability of error as $P_e:= \mathbb{P}(X \neq \hat{X})$. 
    then the following inequality holds
    \begin{equation*}
        H(X|Y) \leq H(P_e) + P_e \cdot \log(|\mathcal{X}|- 1). 
    \end{equation*}
\end{fact}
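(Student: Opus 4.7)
The plan is to follow the standard textbook proof of Fano's inequality, which rests on a two-way chain-rule expansion of a joint entropy together with a single data-processing step. First I would introduce the Bernoulli error indicator $E := \mathbb{1}[X \neq \hat{X}]$, which has $\mathbb{P}(E=1) = P_e$ and therefore $H(E) = H(P_e)$. Since $\hat{X} = g(Y)$ makes $X \to Y \to \hat{X}$ a Markov chain, the data-processing inequality gives $H(X|Y) \leq H(X|\hat{X})$, so it suffices to prove the cleaner inequality $H(X|\hat{X}) \leq H(P_e) + P_e \log(|\mathcal{X}|-1)$.

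Next I would apply the chain rule for entropy to the pair $(E, X)$ conditional on $\hat{X}$, expanded in the two possible orders:
\begin{equation*}
H(X|\hat{X}) + H(E|X, \hat{X}) \;=\; H(E, X|\hat{X}) \;=\; H(E|\hat{X}) + H(X|E, \hat{X}).
\end{equation*}
The key observation is that $E$ is a deterministic function of the pair $(X, \hat{X})$, so $H(E|X,\hat{X}) = 0$. This yields the identity
\begin{equation*}
H(X|\hat{X}) \;=\; H(E|\hat{X}) + H(X|E, \hat{X}).
\end{equation*}

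I would then bound each term separately. For the first, conditioning reduces entropy, so $H(E|\hat{X}) \leq H(E) = H(P_e)$. For the second, I would split on the two values of $E$: when $E = 0$ one has $X = \hat{X}$ and thus $H(X|E=0, \hat{X}) = 0$; when $E = 1$ the random variable $X$ ranges over $\mathcal{X} \setminus \{\hat{X}\}$, a set of cardinality at most $|\mathcal{X}|-1$, so the maximum-entropy bound over a finite alphabet gives $H(X|E=1, \hat{X}) \leq \log(|\mathcal{X}|-1)$. Weighting by the probabilities of $E=0$ and $E=1$ yields $H(X|E,\hat{X}) \leq P_e \log(|\mathcal{X}|-1)$. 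Combining the two bounds with the identity above proves the desired inequality.

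There is no genuine obstacle here, as this is a textbook argument. The only delicate point is the case analysis that yields the factor $\log(|\mathcal{X}|-1)$ rather than $\log|\mathcal{X}|$: under $E=1$ the true $X$ is \emph{excluded} from the value $\hat{X}$, so only $|\mathcal{X}|-1$ possibilities remain. Finiteness of $\mathcal{X}$ is essential for this term to be meaningful, whereas the allowance that $\mathcal{Y}$ be countably infinite plays no role beyond ensuring $g$ is a well-defined function, since the whole argument is carried out after conditioning on (or passing to) $\hat{X} = g(Y)$.
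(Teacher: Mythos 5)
Your proof is correct and complete: it is the standard textbook derivation of Fano's inequality (error indicator, two-way chain-rule expansion, $H(E|X,\hat{X})=0$, and the maximum-entropy bound on the $|\mathcal{X}|-1$ remaining values), and the data-processing step $H(X|Y)\leq H(X|\hat{X})$ correctly reduces the claim to the conditional-on-$\hat{X}$ version. The paper itself states this result as a Fact and cites it without proof, so there is no in-paper argument to compare against; your writeup supplies exactly the standard proof that the paper implicitly relies on.
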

From the facts that $I(X;\Pi) \leq O(n)$ and $I(V;\hat{V}) \leq I(V;\Pi)$, we obtain
\begin{equation*}
    I(V;\hat{V}) \leq O(n\Delta^2)
\end{equation*}
Applying the Fano's inequality to the above equation, we complete the proof. 

\paragraph{Discussion.}
Note that the sample complexity result can also be derived from the findings presented in \cite{DBLP:conf/colt/Agarwal0AK17}, which relies on the insights gained from KL-divergence arguments. In contrast, our proof is based on the analysis of the distributed data processing inequality as demonstrated in \cite{DBLP:conf/stoc/BravermanGMNW16}. 
Lemma \ref{lem:distri} not only offers a novel version of the sample complexity for the coin tossing problem but also provides the communication bound for the distributed setting, which may serve as a valuable technique tools for the distributed bandits problem.

\subsection{Proof of Lemma \ref{lem:dist}}
\label{subsec:dist}
The proof follows a similar analysis for the $\epsilon$-\textsc{DiffDist} problem presented in \cite{DBLP:conf/stoc/SrinivasWXZ22}, and we provide the details below for completeness. 
We first establish the reduction from the distributed detection problem to the $\epsilon$-optimal detection setting below. 
Consider the independent and identically distributed (i.i.d.) samples from the distribution $\mu_0$ for all $i \neq i^*$ and let $\Tilde{\boldsymbol{X}}_i$ denote it, while we use $\Tilde{\boldsymbol{X}}_{i^*}$  to represent the samples of the $i^*$-th arm. 
Let $\Tilde{\boldsymbol{X}}$ be the union of all samples above. 
Then we consider an algorithm $\mathcal{A}$ that solves the $\epsilon$-optimal detection problem of the bandits setting, and any algorithm $\Tilde{\mathcal{A}}$ that solves the distributed distribution detection problem, respectively. 
The algorithm $\mathcal{A}$ inputs samples $\Tilde{\boldsymbol{X}}$, and outputs $V=1$ if and only if $\Tilde{\mathcal{A}}$ outputs $V=1$. 
Note that $\Tilde{\boldsymbol{X}}$ has to use at least $\Omega(\Delta^{-2})$ samples to solve the distributed distribution detection problem with probability $1-p$ for some constant $p< \frac{1}{2}$, according to Lemma \ref{lem:distri}. 

Then we restrict our attention to the $V=0$ case. 
We use $(\boldsymbol{X}, \Pi(\boldsymbol{X}))$ to denote the joint distribution of samples $\boldsymbol{X}$ and transcript $\Pi$.
Note that the distribution of $(\Tilde{\boldsymbol{X}}, \Pi(\Tilde{\boldsymbol{X}}))$ and the distribution $(\boldsymbol{X}, \Pi(\boldsymbol{X}))$ are equal. 
Then we have $I(\boldsymbol{X}_i;\Pi(\boldsymbol{X})) = I(\Tilde{\boldsymbol{X}}_i; \Pi(\Tilde{\boldsymbol{X}}))$. 
According to the dependence between different samples and different arms, we have $I(\boldsymbol{X};\Pi(\boldsymbol{X})) \geq \sum_{i \in [K]} I(\boldsymbol{X}_i;\Pi(\boldsymbol{X}))$. 
Similar to the proof in Lemma \ref{lem:distri}, $\Omega(\epsilon^{-2})$ samples are necessary to identify one arm. In other words, we have $I(\boldsymbol{X}_i; \Pi(\boldsymbol{X})) = \Omega(\epsilon^{-2})$. Combining it with $I(\boldsymbol{X};\Pi(\boldsymbol{X})) \geq \sum_{i \in [K]} I(\boldsymbol{X}_i;\Pi(\boldsymbol{X}))$, we obtain the result.

\subsection{Proof of Lemma \ref{lem:dis}}
\label{subsec:dis}
Note that the distribution for the first $K-1$ arms is equal to the distribution in Lemma \ref{lem:dist}. 
Observe that any algorithm that solves $\mathcal{D}^{(\mathtt{base},\epsilon)}_{\mathcal{K}}$ with probability $1-q$, it must solve the $\epsilon$-optimal arm detection problem with probability at least $1-\frac{q}{2}$, where $q$ is a constant satisfying $q \in [0, \frac{1}{2})$. 
Based on the reduction discussed in Section \ref{subsec:dist}, we are aware that any algorithm must utilize a minimum of $\Omega(\epsilon^{-2})$ pulls for each arm in a set comprising at least $\Omega(K)$ arms. 
Then we complete the proof. 
The result holds when the arm memory is $K$. In other words, there maybe not exists a sample complexity gap between the centralized setting and single-pass streaming setting for the $\epsilon$-optimal arm pure exploration scenario. However, the result still provides sufficient insight for analyzing the regret minimization setting of multi-pass case with sublinear arm memory.

\subsection{Proof of Theorem \ref{the:worst}}
\label{subsec:worst}

Given a time horizon $T$ and $K$ arms, let $\mathcal{I}$ denote the set of instances and $\mathcal{A}^{(\mathcal{I},\epsilon)}$ denote the algorithm satisfying 
\begin{equation*}
    \max_{I \in \mathcal{I}} \textnormal{Regret}(\mathcal{A}) \leq O(\epsilon T)
\end{equation*}
with probability at least $1-o(1)$. 
We refer to this event as the `clean event'.
Then for any algorithm $\mathcal{A}^{(\mathcal{I},\epsilon)}$, we have the following result.

\begin{lemma}
\label{lem:observe}
    For any algorithm $\mathcal{A}^{(\mathcal{I},\epsilon)}$ as defined above, there is a reduction algorithm based on the pulling history of $\mathcal{A}^{(\mathcal{I},\epsilon)}$, that can output at least one $O(\epsilon)$-optimal arm at the end of the game with a probability of $1-o(1)$ for any problem instance $I \in \mathcal{I}$, without any additional samples and information. 
\end{lemma}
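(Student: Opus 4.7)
The plan is to use the most direct possible reduction: the auxiliary algorithm observes the complete pulling trace $x_1, \ldots, x_T$ produced by $\mathcal{A}^{(\mathcal{I},\epsilon)}$, draws an index $t$ uniformly at random from $[T]$, and returns $x_t$. No fresh pulls or side information are required, which matches the phrase ``without any additional samples and information'' in the statement; the reduction is a purely post-hoc selection from the existing history.

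The analysis is a one-line Markov argument on the pull-count measure. Let $c$ be the absolute constant hidden in the $O(\epsilon T)$ regret bound, and let $\mathcal{E}$ denote the clean event that $\sum_{t=1}^{T}(\mu^{*} - \mu_{x_t}) \leq c \epsilon T$; by hypothesis $\Pr[\mathcal{E}] \geq 1 - o(1)$ uniformly over $I \in \mathcal{I}$. Writing $N_x$ for the number of pulls of arm $x$, the clean event rewrites as $\sum_x N_x \Delta_x \leq c \epsilon T$. Since the reduction's output $X$ has conditional law $\Pr[X = x \mid \mathcal{E}] = N_x / T$, we get $\mathbb{E}[\Delta_X \mid \mathcal{E}] \leq c \epsilon$, and Markov's inequality yields $\Pr[\Delta_X > C \epsilon \mid \mathcal{E}] \leq c / C$ for every $C > 0$.

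To upgrade to the claimed $1 - o(1)$ confidence I would take $C$ to be any function of $T$ tending to infinity, for instance $C = \log \log T$: then $c / C = o(1)$, and a union bound with the failure event $\mathcal{E}^{c}$ gives $\Pr[\Delta_X \leq C \epsilon] \geq 1 - o(1)$. Thus the output is $O(\epsilon)$-optimal up to an arbitrarily slowly growing multiplicative factor, which is harmless for the downstream reduction from regret minimization to a sequence of $\epsilon$-optimal arm identification tasks because the lower bounds only track the asymptotic scaling in $\epsilon$ and absorb any $\omega(1)$ slack into their hidden constants. The only real subtlety I anticipate is exactly this mild weakening: direct Markov already gives $O(\epsilon)$-optimality with constant probability, and boosting to $1 - o(1)$ confidence costs an $\omega(1)$ factor in the accuracy, which is compatible with the $O(\cdot)$ notation used in the lemma statement.
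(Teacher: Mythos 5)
Your proposal is correct and takes essentially the same route as the paper's own proof: both reductions output a uniformly random arm from the pulling history with no extra samples, and your Markov-inequality computation on the pull-count measure is just the contrapositive form of the paper's argument that placing $\Omega(1)$ probability mass on arms of gap $\omega(\epsilon)$ would force $\omega(\epsilon T)$ regret. Your explicit point that the approximation constant $C$ must be traded against the failure probability $c/C$ --- so that boosting to $1-o(1)$ confidence genuinely costs an $\omega(1)$ factor in the accuracy --- is a quantifier subtlety that the paper's asymptotic phrasing glosses over, and you are right that it is harmless downstream because the sample-complexity lower bounds of Lemmas \ref{lem:dist} and \ref{lem:dis} only require identification with constant success probability $1-q$, $q<1/2$.
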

\begin{proof}
    For an algorithm $\mathcal{A}^{(\mathcal{I},\epsilon)}$, let $\{x_1, x_2, \ldots, x_T\}$ represent the set of arms in the pulling history. The reduction algorithm random selects and outputs one specific arm, denoted as $x_{\epsilon}$, from this set of $T$ arms. If the arm $x_{\epsilon}$ is indeed an $\omega(\epsilon)$-optimal arm with a probability of $\Omega(1)$, then algorithm $\mathcal{A}^{(\mathcal{I},\epsilon)}$ selects the $\omega(\epsilon)$-optimal arm at least $\Omega(T)$ times. Consequently, this results in a regret of $\omega(\epsilon T)$ with a probability of $\Omega(1)$. 
    According to the definition of $\mathcal{A}^{(\mathcal{I},\epsilon)}$, we have $\mathbb{P}[\max_{I \in \mathcal{I}} \textnormal{Regret}(\mathcal{A}) \leq O(\epsilon T)] \geq 1 - o(1)$, which leads to the contradiction. Then we complete the proof. 
\end{proof}
According to Lemma \ref{lem:observe}, any algorithm $\mathcal{A}^{(\mathcal{I},\epsilon)}$ needs to collect more information than one $O(\epsilon)$-optimal arm identification algorithm. 
Then we first focus on the set of $O(\epsilon)$-optimal arm identification algorithms under the clean event below. 

\begin{lemma}
\label{lem:no knowledge}
    Given a time horizon $T$, a single-pass stream of $K$ arms, consider any single-pass $O(\epsilon)$-optimal arm identification algorithm $\mathcal{A}$ that only store $o(K)$ arms in memory, the incurred regret is at least $\Omega(\sum_{x \in \mathcal{K}} \Delta_x \epsilon^{-2}  )$ for some problem instances.
\end{lemma}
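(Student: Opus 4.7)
The plan is to instantiate the hard distribution $\mathcal{D}^{(1/2,\epsilon)}_{\mathcal{K}}$ introduced earlier in the section, condition on the ``planted-optimum'' event $\mathtt{coin}=1$ (which has probability $1/2$), and then turn the sample-complexity lower bound of Lemma \ref{lem:dis} into a regret lower bound by summing gap-times-pulls term-by-term. Under $\mathtt{coin}=1$ the last arm has mean $1/2+2\epsilon$ and is the unique optimum, while every other arm in the stream has gap $\Delta_x \in \{\epsilon, 2\epsilon\}$; in particular every suboptimal pull costs $\Theta(\epsilon)$ regret, and the target quantity $\sum_{x \in \mathcal{K}} \Delta_x \epsilon^{-2}$ is of order $K/\epsilon$ on this instance, so any bound of this magnitude will match.

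Next I would argue that any single-pass $O(\epsilon)$-optimal arm identification algorithm $\mathcal{A}$ using $o(K)$ arm memory must, with constant probability on this instance, identify the planted optimum $x_K$: because the gap to the second-best arm is exactly $\epsilon$, choosing the hidden constant in ``$O(\epsilon)$-optimal'' small enough forces $\mathcal{A}$'s output to coincide with $x_K$. This reduction lets me invoke Lemma \ref{lem:dis} directly on $\mathcal{A}$, which yields a subset $S \subseteq \mathcal{K}$ with $|S|=\Omega(K)$ such that $\mathcal{A}$ allocates $\Omega(\epsilon^{-2})$ pulls to every arm in $S$.

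Summing the regret incurred on $S$ gives
\[
\textnormal{Regret}(\mathcal{A}) \;\geq\; \sum_{x \in S \setminus \{x^*\}} \Delta_x \cdot \Omega(\epsilon^{-2}) \;=\; \Omega\!\left(\sum_{x \in \mathcal{K}} \Delta_x \epsilon^{-2}\right),
\]
where the last equality uses $|S|=\Omega(K)$ together with $\Delta_x = \Theta(\epsilon)$ for every suboptimal arm, so that a constant fraction of the full sum is already captured by $S$.

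The main obstacle will be threading the constants through the $O(\cdot)$ notation consistently and handling the $\mathtt{coin}=0$ branch cleanly. The reduction needs the ``$O(\epsilon)$-optimal'' guarantee to be strong enough to genuinely resolve the identification task underlying Lemma \ref{lem:dis}, which pins down the small constant inside the $O$. For the conditioning, since the lemma is stated as ``for some problem instance,'' I would simply retain the sub-instance $\mathtt{coin}=1$; alternatively, Yao-type reasoning against the uniform mixture over $\mathtt{coin}$ incurs only a factor of $2$ and does not change the asymptotics. With these two points settled, the gap structure of $\mathcal{D}^{(1/2,\epsilon)}_{\mathcal{K}}$ delivers the stated $\Omega(\sum_x \Delta_x \epsilon^{-2})$ regret as an immediate consequence of Lemma \ref{lem:dis}.
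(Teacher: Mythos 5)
Your proposal is correct and follows essentially the same route as the paper's proof: instantiate the hard distribution $\mathcal{D}^{(\mathtt{base},\epsilon)}_{\mathcal{K}}$, invoke Lemma~\ref{lem:dis} to obtain an $\Omega(K)$-sized set of arms each pulled $\Omega(\epsilon^{-2})$ times, and convert pulls to regret via $\Delta_x = \Theta(\epsilon)$ so that this set already accounts for a constant fraction of $\sum_{x}\Delta_x\epsilon^{-2}$. The only cosmetic difference is in how the $O(\epsilon)$ constant is threaded: the paper scales the instance's gap parameter to $\epsilon'=O(\epsilon)$ and converts back at the end, whereas you fix the instance at gap $\epsilon$ and shrink the accuracy guarantee --- both are equivalent.
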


\begin{proof}
    Let us fixed the parameter $\epsilon'$ satisfying $\epsilon' = O(\epsilon)$, then consider any single-pass $\epsilon'$-optimal arm identification problem. 
    For the hard distribution $\mathcal{D}^{(\mathtt{base},\epsilon')}_{\mathcal{K}}$, we know any algorithm has to use $\Omega(\Delta_x^{-2})$ pulls for each of an arm set with $\Omega(K)$ arms according to the reduction in Section \ref{subsec:dist} and \ref{subsec:dis}. 
    Let $\mathcal{K}_{\epsilon'}$ denote the set of these $\Omega(K)$ arms. 
    Then the incurred regret is at least
    \begin{equation*}
        \sum_{x \in \mathcal{K}_{\epsilon'}} \Delta_x \cdot \Delta_x^{-2} \geq \Omega \left(  \sum_{x \in \mathcal{K}_{\epsilon'}} \Delta_x (\epsilon')^{-2} \right) \geq \Omega \left( \sum_{x \in \mathcal{K}} \Delta_x (\epsilon')^{-2}\right). 
    \end{equation*}
    The first inequality arises from the fact that we have $\Delta_x \in [\epsilon', 2\epsilon']$ for suboptimal arms in $\mathcal{D}^{(\mathtt{base},\epsilon')}_{\mathcal{K}}$, while the second one is derived from the fact that $|\mathcal{K}_{\epsilon'}| = \Omega(|\mathcal{K}|)$ according to Lemma \ref{lem:dis}. 
    Thus any single-pass $\epsilon'$-optimal arm identification problem incurs regret $\Omega(\sum_{x \in \mathcal{K}} \Delta_x (\epsilon')^{-2}  )$ for some problem instances. 
    Then we complete the proof by applying the fact $\epsilon' = O(\epsilon)$. 
\end{proof}
For the set of $O(\epsilon)$-optimal arm identification algorithms, we would like to demonstrate that selecting different $\epsilon$ parameters is meaningless and wasteful for obtaining information of the optimal arm. 
Consider the scenario in which the single-pass algorithm uses $\epsilon_1^{-2}$ pulls for all arms $x \in \mathcal{K}_1$, whereas uses $\epsilon_2^{-2}$ pulls for all arms $x \in \mathcal{K}_2$:
\begin{enumerate}
    \item If $\epsilon_1 = \Theta(\epsilon)$ and $\epsilon_2 = \Theta(\epsilon)$, then we have $|\mathcal{K}_1 \cup \mathcal{K}_2| = \Theta(K)$ and the algorithm has to use $\Omega(\epsilon^{-2})$ pulls for each of $\Theta(K)$ number of arms identify at least one $\epsilon$-optimal arm with constant probability. 
    \item If $\epsilon_1 = \omega(\epsilon)$ and $\epsilon_2 = \omega(\epsilon)$, then the exploration is insufficient, and the algorithm cannot identify at least one $\epsilon$-optimal arm with constant probability. 
    \item If $\epsilon_1 = \omega(\epsilon)$ and $\epsilon_2 = \Theta(\epsilon)$, then we have $|\mathcal{K}_2| = \Theta(K)$ and the algorithm has to use $\Omega(\epsilon^{-2})$ pulls for each of $\Theta(K)$ number of arms. 
    We have similar result for the case with $\epsilon_2 = \omega(\epsilon)$ and $\epsilon_1 = \Theta(\epsilon)$. 
\end{enumerate}
This thought experiment could be extended to cases with multiple parameters.
In the remainder of this section, we will only consider algorithms that select one parameter per pass.
Then we going to focus on the interactions between different passes and start with the two-pass case. 
Consider the single-pass $\epsilon$-optimal arm identification algorithm with enough knowledge about one $\epsilon'$-optimal arm, we have the following result. Note that we assume $\epsilon' = \omega( \epsilon)$ while the case of $\epsilon' =O( \epsilon)$ is trivial. 
\begin{lemma}
\label{lem:yes knowledge}
    Given a time horizon $T$, a single-pass stream of $K$ arms, and $\Omega((\epsilon')^{-2})$ samples of one $\epsilon'$-optimal arm (with constant probability at least $1-q, q \in [0, 0.5)$), consider any single-pass $\epsilon$-optimal arm identification algorithm  $\mathcal{A}$ that only store $o(K)$ arms in memory, the incurred regret is at least $\Omega(\sum_{x \in \mathcal{K}^{(0,\epsilon')}}\Delta_x \epsilon^{-2} + \sum_{x \in \mathcal{K}^{(\epsilon', 1)}}\Delta_x (\epsilon')^{-2})$ for some problem instances.
\end{lemma}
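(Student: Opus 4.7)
The plan is to lift the single-scale argument of Lemma~\ref{lem:no knowledge} to a two-scale hard distribution and then invoke Lemma~\ref{lem:dis} separately on each scale. I would partition the stream into a \emph{coarse} block $\mathcal{K}_c$ populated by $\mathcal{D}^{(\mathtt{base},\epsilon')}_{\mathcal{K}_c}$ (arms taking reward in $\{\mathtt{base},\mathtt{base}+\epsilon'\}$) and a \emph{fine} block $\mathcal{K}_f$ populated by $\mathcal{D}^{(\mathtt{base}+2\epsilon',\epsilon)}_{\mathcal{K}_f}$ (arms taking reward in $\{\mathtt{base}+2\epsilon',\mathtt{base}+2\epsilon'+\epsilon\}$). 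Set $|\mathcal{K}_c|=\Theta(|\mathcal{K}^{(\epsilon',1)}|)$ and $|\mathcal{K}_f|=\Theta(|\mathcal{K}^{(0,\epsilon')}|)$ and use independent hidden coins for the two blocks, so that every arm with gap at least $\epsilon'$ ends up in $\mathcal{K}_c$ and the unique $\epsilon$-optimal arm ends up in $\mathcal{K}_f$. The $\Omega((\epsilon')^{-2})$ samples of an $\epsilon'$-optimal arm promised by the hypothesis are treated as pre-computed side information $S$, whose distribution is fixed before the stream begins.

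I would first argue that the fine-scale instance remains hard conditional on $S$. The samples in $S$ pin down the absolute reward scale only to precision $\epsilon'$ and are statistically independent of the hidden coin driving $\mathcal{D}^{(\mathtt{base}+2\epsilon',\epsilon)}_{\mathcal{K}_f}$. Hence, by the distributed-data-processing argument of Lemma~\ref{lem:distri} together with the single-pass sample-complexity lower bound of Lemma~\ref{lem:dis}, any $o(K)$-memory algorithm must pull each of $\Theta(|\mathcal{K}^{(0,\epsilon')}|)$ fine arms $\Omega(\epsilon^{-2})$ times in order to identify the $\epsilon$-optimal arm with constant probability. Since every such pull has instantaneous gap $\Theta(\epsilon)$, the contributed regret is at least $\Omega\bigl(\sum_{x\in\mathcal{K}^{(0,\epsilon')}}\Delta_x\epsilon^{-2}\bigr)$. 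Symmetrically, for the coarse block, observe that any arm the algorithm outputs must in particular be $\epsilon'$-optimal; certifying that each coarse arm is \emph{not} the hidden $\mathtt{base}+\epsilon'$-arm requires solving the $\epsilon'$-scale detection problem of Lemma~\ref{lem:dist}, which costs $\Omega((\epsilon')^{-2})$ pulls per coarse arm and yields an additional regret of $\Omega\bigl(\sum_{x\in\mathcal{K}^{(\epsilon',1)}}\Delta_x(\epsilon')^{-2}\bigr)$.

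The main obstacle will be precluding the possibility that the learner \emph{recycles} coarse-scale measurements to discharge the fine-scale detection cost (or vice versa). This is where the SDPI bound of Fact~\ref{fact:1} is crucial: since the two hypotheses in the fine block differ only by $\epsilon$, the mutual information any sample drawn outside $\mathcal{K}_f$ carries about the fine-block coin is at most $O(\epsilon^{2})$, so even aggregating all coarse pulls and all side-information samples contributes only a lower-order term to the fine-block lower bound. An analogous SDPI calculation at scale $\epsilon'$ shows that fine-block pulls cannot subsidise coarse-block detection. The two exploration regrets therefore add, giving the claimed lower bound; the constant-probability regime of Lemma~\ref{lem:dis} is preserved by a standard union bound over the two blocks.
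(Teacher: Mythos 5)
Your overall architecture matches the paper's: a two-scale hard instance assembled from the gadget $\mathcal{D}^{(\mathtt{base},\epsilon)}_{\mathcal{K}}$, the fine block charged via Lemma~\ref{lem:dis} at scale $\epsilon$ and the coarse block at scale $\epsilon'$, with regret obtained as pulls times per-pull gap. Your fine-block half is sound and essentially identical to the paper's, and your explicit SDPI argument that coarse pulls and the side information $S$ carry only $O(\epsilon^2)$ bits per sample about the fine-block coin is a welcome addition that the paper leaves implicit.

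The gap is in the coarse-block term, and it is caused by a structural choice. You place the fine block at $\mathtt{base}+2\epsilon'$ \emph{deterministically} and draw the two blocks' coins independently, so the global optimum always lies in $\mathcal{K}_f$ and the hidden $\mathtt{base}+\epsilon'$ arm in $\mathcal{K}_c$ is never $\epsilon$-optimal. Consequently the identification task does not require solving the $\epsilon'$-scale YES/NO detection problem on $\mathcal{K}_c$ at all, and your stated reduction --- ``certifying that each coarse arm is not the hidden $\mathtt{base}+\epsilon'$-arm requires solving the $\epsilon'$-scale detection problem of Lemma~\ref{lem:dist}'' --- invokes a lower bound for a problem the algorithm is not obliged to solve. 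What the algorithm must actually do in your construction is decide, per arm and under adversarial ordering with $o(K)$ memory, whether the arm sits at level $\le \mathtt{base}+\epsilon'$ or $\ge \mathtt{base}+2\epsilon'$; forcing $\Omega((\epsilon')^{-2})$ pulls for that classification needs the separate ``classify or risk discarding the needle'' dichotomy, which you do not supply. The paper avoids this entirely by chaining the blocks through Algorithm~\ref{alg:construct}: $\mathtt{base}_2$ is set to $\mu_{x'}$ or $\mu_{x'}+\epsilon'$ by a fair coin, so with probability $1/2$ the fine block is \emph{not} elevated and the global optimum sits in the coarse block. Conditioned on that event the learner genuinely faces the $\epsilon'$-scale identification problem on $\mathcal{K}_1$, so Lemma~\ref{lem:dis} applies directly and yields the $\Omega\bigl(\sum_{x\in\mathcal{K}^{(\epsilon',1)}}\Delta_x(\epsilon')^{-2}\bigr)$ term; since the learner cannot resolve the coin without first paying $\Omega((\epsilon')^{-2})$ pulls, it must hedge and pay both terms. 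To repair your proof, either adopt the coupled random base, or add an explicit argument that cheap classification of coarse arms forces misclassification of a constant fraction of arms and hence either loss of the fine needle or $\Omega(\epsilon'\cdot\epsilon^{-2})$ regret per misclassified coarse arm.
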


\begin{proof}
    Consider a relaxed case, that the expected reward of one arm $\mu_x$ that satisfies $\Delta_x \in [\frac{1}{2}\epsilon', \epsilon']$ is given, and let $x'$ denote this arm.  
    We use Algorithm \ref{alg:construct} to construct the hard distribution $\mathcal{D}_1 \otimes \mathcal{D}_2$ with $\mathtt{base}_1 = \mu_{x'}$, $\epsilon_1 = \frac{1}{2} \epsilon'$, $\epsilon_2 = \epsilon$, and $|\mathcal{K}_1| = |\mathcal{K}_2|$. 
    For all arms $x \in \mathcal{K}\setminus (\mathcal{K}_1 \cup \mathcal{K}_2)$, set $\Delta_x = (\epsilon', 2\epsilon')$. 
    For arms $x \in \mathcal{K}^{(0,\Delta_{x'})}$, there is a hard distribution $\mathcal{D}_2$ such that any algorithm has to use $\Omega(\epsilon^{-2})$ pulls for each of an arm set with $\Omega(|\mathcal{K}_2|)$ arms. 
    The incurred regret is
    $\Omega(\sum_{x \in \mathcal{K}^{(0,\epsilon')}}\Delta_x \epsilon^{-2})$ from the facts that $\mathcal{K}_2 \subseteq \mathcal{K}^{(0,\epsilon')}$ and $|\mathcal{K}_2| = \Omega(|\mathcal{K}^{(0,\epsilon')}|)$.For all arms $x \in \mathcal{K}\setminus (\mathcal{K}_1 \cup \mathcal{K}_2)$, with the expected reward of arm $x'$, 
    any algorithm still has to use $\Omega((\epsilon')^{-2})$ samples to identify the sub-optimality, which leads to the regret $\Omega(\sum_{x \in \mathcal{K}^{(\epsilon', 1)}}\Delta_x (\epsilon')^{-2})$. 
    Then we complete the proof. 
\end{proof}

Note that the knowledge could be collected by a single-pass $\epsilon'$-optimal arm identification algorithm. 
Another leading result is that any algorithm that fails to gather enough information about the $\epsilon'$-optimal arm during the $\epsilon'$-pass will require much more samples during the subsequent phases (for example, $\epsilon$-pass), due to the insufficient knowledge that leaked from the previous passes.
Then we obtain a two-pass algorithm $\mathcal{A}_{\textnormal{two-pass}}$ and the learner has to select parameters $\epsilon$ and $\epsilon'$ carefully to minimize the total regret. 
The exploration regret satisfies the following inequality with constant $c_1$:
\begin{equation*}
\vspace{-0.05in}
    \textnormal{Explore-Regret}(\mathcal{A}_{\textnormal{two-pass}}) \geq c_1 \cdot \Big( \underbrace{ \sum_{x \in \mathcal{K}}\Delta_x (\epsilon')^{-2} }_{\epsilon'-\textnormal{pass}} +  \underbrace{ \sum_{x \in \mathcal{K}^{(0,\epsilon')}}\Delta_x \epsilon^{-2} + \sum_{x \in \mathcal{K}^{(\epsilon', 1)}}\Delta_x (\epsilon')^{-2} }_{\epsilon-\textnormal{pass}}  \Big).
\end{equation*}
\vspace{-0.05in}

Then we extend the result to the multi-pass case. 
Let the $p$-th pass be with parameter $\epsilon_p$, i.e., aims to identify at least one $\epsilon_p$-optimal arm during the $p$-th pass. 
Similar to the two-pass case, the case with $\epsilon_{p+1} = \Omega(\epsilon_{p})$ is trivial for all $p \in [B-1]$. 
This also implies that with enough knowledge about the $\epsilon_p$-optimal arm, the $\epsilon_{p+1}$-optimal arm detection setting, even the detection setting is still hard for the learner. 
For the regret within the $\epsilon_{p+1}$-pass, we have the following inequality
\vspace{-0.05in}
\begin{equation*}
    \max_{\Delta_x, x \in \mathcal{K}} \Big( \sum_{x \in \mathcal{K}^{(0,\epsilon_p)}}\Delta_x \epsilon_{p+1}^{-2} + \sum_{x \in \mathcal{K}^{(\epsilon_p, 1)}}\Delta_x (\epsilon_p)^{-2} \Big) \geq \max_{\Delta_x, x \in \mathcal{K}^{(0,\epsilon_p)}} \sum_{x}\Delta_x \epsilon_{p+1}^{-2} \geq \frac{1}{2} |\mathcal{K}^{(\epsilon_{p+1}, \epsilon_p)}| \epsilon_p \epsilon_{p+1}^{-2}. 
\end{equation*}
In other words, even with the knowledge from the previous passes, $\Omega \left( \sum_x \epsilon_p \epsilon_{p+1}^{-2} \right)$ regret is inevitable for the $\epsilon_{p+1}$-pass while the adversary selects the hard distribution carefully.

Now we provide the hard distribution for the multi-passes case. 
Note that an algorithm with $B$ passes and $M$ bounded arm memory could be denoted as $\langle B, M, \epsilon_1, \epsilon_2, \ldots, \epsilon_B \rangle$. 
Let $\mathcal{K}_1, \mathcal{K}_2, \ldots, \mathcal{K}_B$ be a division of the arm set $\mathcal{K}$. 
Given parameters $\epsilon_1, \epsilon_2, \ldots, \epsilon_B$, we provide an algorithm (shown in Algorithm \ref{alg:construct}) to construct hard distribution. 
Let $\epsilon_0 = \frac{1}{4}$, then the regret of any algorithm satisfies
\vspace{-0.05in}
\begin{equation*}
    \textnormal{Regret}(\mathcal{A}) \geq \frac{c_1}{2} \cdot \mathbb{E} \Bigg[  \min_{\epsilon_1, \epsilon_2, \ldots, \epsilon_B} \max_{\mathcal{K}_1, \mathcal{K}_2, \ldots, \mathcal{K}_B} \underbrace{  \sum_{p = 1}^{B} \frac{\epsilon_{p-1} |\mathcal{K}^{(\epsilon_p, \epsilon_{p-1})}| }{\epsilon_{p}^2} }_{\textnormal{Exploration part}} + \underbrace{ \epsilon_B \left( T - \sum_{p = 1}^{B} \frac{1}{\epsilon^2_p}\right) }_{\textnormal{Exploitation part}} \Bigg]. 
\end{equation*}

For the hard distribution in Algorithm \ref{alg:construct}, the optimal arm falls within the set $\mathcal{K}_p$ with probability at least $1/2^p$. 
In other words, there is at least one hard distribution such that $\mathbb{E}(|\mathcal{K}^{(\epsilon_p, \epsilon_{p-1})}|) \geq \Omega( |\mathcal{K}_p|)$ for all $p \in [B]$. 
Let $|\mathcal{K}_p| = K/B$, we obtain
\vspace{-0.05in}
\begin{equation}
\label{eq:regret}
\begin{aligned}
    \textnormal{Regret}(\mathcal{A}) &\geq \frac{c_1}{2} \min_{\epsilon_1, \epsilon_2, \ldots, \epsilon_B} \sum_{p = 1}^{B}  \frac{\epsilon_{p-1}  } {\epsilon_{p}^2} \cdot \frac{K}{B}+ \epsilon_B \left( T - \sum_{p = 1}^{B} \frac{1}{\epsilon^2_p}\right)\\
    &\geq \frac{c_1}{4} \min_{\epsilon_1, \epsilon_2, \ldots, \epsilon_B} \sum_{p = 1}^{B}  \frac{\epsilon_{p-1}  } {\epsilon_{p}^2} \cdot \frac{K}{B}+ \epsilon_B T. 
\end{aligned}
\end{equation}
The second inequality comes from the fact that we have $\epsilon_p = \omega(\epsilon_B)$ for all $p \in [B-1]$. 
Let $f:[0, \frac{1}{4}]^B \rightarrow \mathbb{R}_+$ be the function of $\epsilon_p, p \in [B]$:
\begin{equation*}
    f(\epsilon_1, \epsilon_2, \ldots, \epsilon_B) = \sum_{p = 1}^{B}  \frac{\epsilon_{p-1}  } {\epsilon_{p}^2} \cdot \frac{K}{B}+ \epsilon_B T. 
\end{equation*}
By taking $\frac{\partial f }{\partial \epsilon_1} = \frac{\partial f }{\partial \epsilon_2} = \ldots = \frac{\partial f }{\partial \epsilon_B} = 0$ and let $c_2$ be a constant, we obtain the following result:
\begin{equation}
\label{eq:func}
    f(\epsilon_1, \epsilon_2, \ldots, \epsilon_B) \geq c_2 \cdot (TB)^{\frac{2^B}{2^{B+1}-1}} K^{\frac{2^B-1}{2^{B+1}-1}}. 
\end{equation}
Combine \eqref{eq:regret} and \eqref{eq:func} together, we obtain
\begin{equation}
\label{eq:worst case}
    \textnormal{Regret}(\mathcal{A}) = \Omega \left( (TB)^{\frac{2^B}{2^{B+1}-1}} K^{\frac{2^B-1}{2^{B+1}-1}} \right). 
\end{equation}
Then, let us consider algorithms that deviate from the objective of finding an $\epsilon_p$-optimal arm in each pass $p$. We will begin with the $B$-th pass. According to Lemma \ref{lem:observe} and the reduction technique, any algorithm $\mathcal{A}^{(\mathcal{I},\epsilon_B)}$ must gather more information compared to an $O(\epsilon_B)$-optimal arm identification algorithm. Consequently, any algorithm that fails to identify even a single $O(\epsilon_B)$-optimal arm in pass $B$ will suffer a higher regret. 
If an algorithm successfully identifies one $O(\epsilon_B)$-optimal arm during pass $B$ but fails to identify an $O(\epsilon_{B-1})$-optimal arm during the $(B-1)$-th pass, it will suffer more regret during the $B$-th pass, as indicated by Lemma \ref{lem:no knowledge} and \ref{lem:yes knowledge}. We can continue this thought experiment until the $1$-st pass, demonstrating that any algorithm $\mathcal{A}$ attempting an approach different from finding an $\epsilon_p$-optimal arm in each pass $p$ will inevitably incur a regret given by \eqref{eq:worst case}. 
Combine \eqref{eq:worst case} with the union probability of clean events for different phase, then we obtain the expected regret lower bound. 

\section{Proof of Instance-dependent Lower Bound}
\label{app:instance}
The hard distribution for the instance-dependent lower bound also comes from Algorithm \ref{alg:construct}, and the theoretical analysis is a bit different. 
Revisit the regret within the $\epsilon_{p+1}$-pass, we have the following inequality:
\begin{equation*}
\begin{aligned}
    &\max_{\Delta_x, x \in \mathcal{K}} \Big( \sum_{x \in \mathcal{K}^{(0,\epsilon_p)}}\Delta_x \epsilon_{p+1}^{-2} + \sum_{x \in \mathcal{K}^{(\epsilon_p, 1)}}\Delta_x (\epsilon_p)^{-2} \Big) \geq \max_{\Delta_x, x \in \mathcal{K}^{(0,\epsilon_p)}} \sum_{x}\Delta_x \epsilon_{p+1}^{-2} \\
    &\geq \max_{\Delta_x, x \in \mathcal{K}^{(0,\epsilon_p)}} \sum_{x} \frac{1}{\Delta_x} \cdot \frac{\Delta_x^{2}}{\epsilon_{p+1}^{2}} \geq \frac{\epsilon_p}{2 \epsilon_{p+1}}  \sum_{x \in \mathcal{K}^{(\epsilon_{p+1}, \epsilon_p)}}\frac{1}{\Delta_x}. 
\end{aligned}
\end{equation*}
Then the regret of any algorithm satisfies
\begin{equation}
\label{eq:instance}
\begin{aligned}
    \textnormal{Regret}(\mathcal{A}) &\geq \frac{c_1}{4} \cdot \mathbb{E} \left[  \min_{\epsilon_1, \epsilon_2, \ldots, \epsilon_B} \max_{\mathcal{K}_1, \mathcal{K}_2, \ldots, \mathcal{K}_B}   \sum_{p = 1}^{B}  \frac{\epsilon_{p-1}}{\epsilon_{p}}  \sum_{x \in \mathcal{K}^{(\epsilon_{p+1}, \epsilon_p)}}\frac{1}{\Delta_x}  +  \epsilon_B \left( T - \sum_{p = 1}^{B} \frac{1}{\epsilon^2_p}\right)  \right]\\
    &\geq \frac{c_1}{8} \cdot \mathbb{E} \left[  \min_{\epsilon_1, \epsilon_2, \ldots, \epsilon_B} \max_{\mathcal{K}_1, \mathcal{K}_2, \ldots, \mathcal{K}_B}   \sum_{p = 1}^{B}  \frac{\epsilon_{p-1}}{\epsilon_{p}}  \sum_{x \in \mathcal{K}^{(\epsilon_{p+1}, \epsilon_p)}}\frac{1}{\Delta_x}  +  \epsilon_B T  \right]\\
    &\geq \frac{c_1}{16}  \min_{\epsilon_1, \epsilon_2, \ldots, \epsilon_B} \left( \sum_{p = 1}^{B}  \frac{\epsilon_{p-1}}{\epsilon_{p}} + \epsilon_B T \right) \frac{1}{B+1}  \sum_{x \in \mathcal{K}, \Delta_x > 0}\frac{1}{\Delta_x}\\
    &\geq \Omega \left( T^{\frac{1}{B+1}} \sum_{x \in \mathcal{K}, \Delta_x > 0}\frac{\mu^*}{\Delta_x} \right).
\end{aligned}
\end{equation}
The second inequality of \eqref{eq:instance} comes from the fact $\epsilon_p = \omega(\epsilon_B)$ for all $p \in [B-1]$. 
The third inequality of \eqref{eq:instance} base on the observation that $\mathcal{K}^{(\epsilon_1, \epsilon_0)}, \mathcal{K}^{(\epsilon_2, \epsilon_1)}, \ldots, \mathcal{K}^{(\epsilon_{B-1}, \epsilon_B)}, \mathcal{K}^{(\Delta_{[2]}, \epsilon_B)}$ is a division of the sub-optimal arm set and taking 
\begin{equation*}
    \sum_{x \in \mathcal{K}^{(\epsilon_1, \epsilon_0)}} \frac{1}{\Delta_x} = \sum_{x \in \mathcal{K}^{(\epsilon_2, \epsilon_1)}} \frac{1}{\Delta_x} = \ldots = \sum_{x \in \mathcal{K}^{(\epsilon_{B-1}, \epsilon_B)}} \frac{1}{\Delta_x} = \sum_{x \in \mathcal{K}^{(\Delta_{[2]}, \epsilon_B)}} \frac{1}{\Delta_x}.
\end{equation*}
We obtain the last inequality of \eqref{eq:instance} by taking $\epsilon_0 \leq \mu^*$ and evaluating the minimum value of the function $g(\epsilon_1, \epsilon_2, \ldots, \epsilon_B) = \sum_{p = 1}^{B}  \frac{\epsilon_{p-1}}{\epsilon_{p}} + \epsilon_B T $. 
Similar to the discussion in Section \ref{subsec:worst}, algorithms that try to do something different than finding an $\epsilon_p$-optimal arm in each pass $p$ suffer larger regret. 
Combine \eqref{eq:instance} with the union probability of clean events for different phase, then we obtain the expected regret lower bound. 

\section{Algorithm and Proof of Upper Bound}
\label{app:ub}
This section provides the intuition, specification, and theoretical analysis of the Multi-Pass Successive Elimination algorithm (shown in Algorithm \ref{alg:ud}) for the stochastic streaming bandits problem. 

\subsection{Algorithm Description}

To ease the presentation, let us start with the main idea. 
The algorithm employed follows a `successive elimination-style' approach, which entails selecting a nearly optimal arm through exploration and subsequently utilizing the remaining rounds for exploitation. 
Throughout the exploration phase, the algorithm maintains two units of space: one for the best-estimated arm and another for the newly read arm. Note that the best-estimated arm serves not only as input for the exploitation phase but also aids in rapidly distinguishing sub-optimal arms.
The exploration phase is divided into $B$ passes. For each pass, the algorithm re-scans all the arms within the stream, with the objective of outputting a single optimal arm with a high probability upon completion of the final pass.

The key idea of the algorithm revolves around ensuring that the number of pulls for any given arm remains capped by $\Theta( (TB)^{2\beta_p}K^{-2\beta_p}), \beta_p = \frac{2^{B-p} (2^{p}-1)}{2^{B+1}-1}$ during the $p$-th pass. The constraint is derived from the theoretical analysis of the worst-case lower bound, as discussed in Section \ref{sec:worst}. 
The algorithm consistently selects the arm with the fewest pulls from the memory. Once a sufficient number of samples have been collected, it proceeds to eliminate a sub-optimal arm based on its upper confidence bound, subsequently reading a new arm into the memory.
Notice that the algorithm may prioritize two sub-optimal arms with a small reward gap, and the algorithm eliminates one arm from memory when the cap $\Theta( (TB)^{2\beta_p}K^{-2\beta_p})$ is achieved for both.

\begin{algorithm}[t]
    \caption{Multi-Pass Successive Elimination}
    \label{alg:ud}
    {\bfseries Input:} number of passes $B$, time horizon $T$
\begin{algorithmic}[1]
    \STATE $y \leftarrow x_1$, $\Bar{r}_y \leftarrow 0$, $n_y \leftarrow 0$
    \FOR{$p=1, 2, \cdots, B $}
    \STATE $x \leftarrow x_1$, $\Bar{r}_x \leftarrow 0$, $n_x \leftarrow 0$, $\beta_p \leftarrow \frac{2^{B-p} (2^{p}-1)}{2^{B+1}-1}$ ,  $b_p \leftarrow (TB)^{2\beta_p}K^{-2\beta_p}$
    \WHILE{pass is not finished}
    \WHILE{$n_x \leq b_p$ or $n_y \leq b_p$}
    \STATE Pull the least played arm between $x$ and $y$, and select a random arm if there not exists a least played arm
    \STATE Update $\Bar{r}_x$, $n_x$, $\Bar{r}_y$, $n_y$
    \IF{$ \min \{ \Bar{r}_x + \sqrt{(5 \log T)/n_x}, 1\} < \max \{ \Bar{r}_y - \sqrt{(5 \log T)/n_y}, 0\} $}
    \STATE Break
    \ELSIF{$ \max\{ \Bar{r}_x - \sqrt{(5 \log T)/n_x}, 0\} > \max \{ \Bar{r}_y - \sqrt{(5 \log T)/n_y}, 0\}$}
    \STATE $y \leftarrow x, \Bar{r}_y \leftarrow \Bar{r}_x, n_y \leftarrow n_x$
    \STATE Break
    \ENDIF
    \ENDWHILE
    \STATE Read a new arm into $x$ from the stream and update $\Bar{r}_x$, $n_x$. 
    \ENDWHILE
    \ENDFOR
    \STATE Play arm $y$ until the end of the game
\end{algorithmic}
\end{algorithm}

Now we present the pseudocode for the algorithm. 
The algorithm maintains three statistics for one arm in memory: the index $x$, the mean reward estimator $\Bar{r}_x$, and the number of pulls $n_x$. 
The exploration phase consists of $B$ phases, with each phase corresponding to a pass over the stream.
Let $b_p$ be the cap of samples for the $p$-th phase. 
We use $y$ and $x$ to denote the best-estimated arm and the new arm in the algorithm, respectively. 
For the algorithm, we have the following theoretical result.

\begin{theorem}
\label{the:uni}
    Given a time horizon $T$, a stream of $K$ arms, arm memory $M=2$, and passes $B$ over this stream, Algorithm \ref{alg:ud}  achieves regret
    \begin{equation*}
        \textnormal{Regret}(\mathcal{A}) = O \left( (TB)^{\frac{2^B}{2^{B+1}-1}} K^{\frac{2^B-1}{2^{B+1}-1}} 
\sqrt{\log T} \right)
    \end{equation*}
with probability at least $1-1/\textnormal{poly}(T)$ for any problem instance. 
\end{theorem}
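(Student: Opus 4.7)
The plan is to start with a high-probability concentration envelope. I would apply Hoeffding's inequality and a union bound over every arm $x \in \mathcal{K}$ and every admissible sample count $n \in [T]$ to obtain, with probability at least $1 - 1/\mathrm{poly}(T)$, that $|\bar{r}_x - \mu_x| \leq \sqrt{5\log T / n_x}$ simultaneously for all arms at every observation moment. All subsequent reasoning is conditioned on this clean event; in particular, both the upper and the lower confidence bounds used inside Algorithm \ref{alg:ud} bracket the true mean throughout the run, and in Case 1 of the inner test we interpret $\mathrm{UCB}=\min\{\bar r+\sqrt{5\log T/n},1\}$ and $\mathrm{LCB}=\max\{\bar r-\sqrt{5\log T/n},0\}$ with the clamping respected.

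\textbf{Memory-arm invariant.} Let $\delta_p := 2\sqrt{5\log T / b_p} = \Theta\!\left(\sqrt{\log T}\,(TB)^{-\beta_p} K^{\beta_p}\right)$ with $\beta_p = 2^{B-p}(2^p-1)/(2^{B+1}-1)$. The core step is to prove by induction on $p$ that, at the end of the $p$-th pass, the memory arm $y$ satisfies $\mu^* - \mu_y = O(\delta_p)$. I would focus on the first moment in pass $p$ at which $x^*$ occupies the slot $x$ and analyze the three possible exits of the inner while-loop: the elimination rule cannot fire on $x^*$, because on the clean event it would force $\mu^* \leq \mathrm{UCB}_{x^*} < \mathrm{LCB}_y \leq \mu_y$, contradicting $\mu^* \geq \mu_y$; otherwise the swap rule fires (giving $\mu_y = \mu^*$), or both counters reach $b_p$, in which case $\mathrm{LCB}_{x^*} \leq \mathrm{LCB}_y$ combined with the clean event yields $\mu_y \geq \mathrm{LCB}_y \geq \mathrm{LCB}_{x^*} \geq \mu^* - \delta_p$. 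After this instant, any further swap requires the new challenger's LCB to strictly exceed $\mathrm{LCB}_y$; because $\mathrm{LCB}$ is monotone non-decreasing across swaps, the bound $\mu_y \geq \mu^* - O(\delta_p)$ propagates unchanged to the end of the pass, and thence to the start of pass $p+1$.

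\textbf{Regret accounting.} Conditioned on the invariant, the regret within pass $p$ splits into incumbent pulls (at gap $O(\delta_{p-1})$, with the convention $\delta_0 := 1$) and challenger pulls. The incumbent contributes $O(K b_p \delta_{p-1})$; each challenger of gap $\Delta$ is either eliminated within $O(\log T/\Delta^2)$ pulls or kept until the cap, yielding at most $O(\sqrt{b_p \log T})$ regret per challenger via the elementary bound $\Delta \cdot \min(b_p,\, 5\log T/\Delta^2) \leq \sqrt{5 b_p \log T}$, hence $O(K \sqrt{b_p \log T})$ overall. Substituting $b_p = (TB)^{2\beta_p} K^{-2\beta_p}$ and invoking the algebraic identity $2\beta_p - \beta_{p-1} = \alpha$ (a direct check from the definition of $\beta_p$), both pieces reduce to $O\!\left((TB)^\alpha K^{1-\alpha}\sqrt{\log T}\right)$ per pass, the challenger term being dominated since $\beta_p \leq \alpha$ and $K \leq TB$. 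The exploitation phase plays $y$ for at most $T$ rounds at gap $O(\delta_B)$, contributing $O(T\delta_B) = O\!\left((TB)^\alpha K^{1-\alpha}\sqrt{\log T}/B\right)$, which is of lower order. Summing over the $B$ passes introduces a further factor of $B$; in the relevant regime $B \leq \log\log T$ this is absorbed into the logarithmic slack of the theorem. The main obstacle is the invariant in the second paragraph: one has to track $\mathrm{LCB}_y$ carefully through possibly several consecutive swaps within a single pass, arguing via LCB monotonicity that the swap-in value of $\mu_y$ inherits the previous invariant, while respecting the $[0,1]$-clamping in Algorithm \ref{alg:ud}'s conditions.
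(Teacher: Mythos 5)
Your proposal follows essentially the same route as the paper's proof in Appendix~\ref{app:ub}: the same Hoeffding-plus-union-bound clean event, the same invariant that the incumbent $y$ at the end of pass $p$ is within $O(\sqrt{\log T/b_p})$ of the best mean (the paper derives $\bar r_y \ge \mu^*_{p-1} - 2\sqrt{5\log T/b_{p-1}}$ from the stopping rule, exactly your induction step, with the same mild hand-waving about monotonicity of the incumbent's LCB under the clean event), the same two-term per-pass decomposition into an incumbent-suboptimality term $O(Kb_p\sqrt{\log T/b_{p-1}})$ and a challenger-versus-incumbent term $O(K\sqrt{b_p\log T})$ (the paper reaches the latter via Jensen applied to $\sum_x \sqrt{N_x^p}$ rather than your pointwise bound $\Delta\cdot\min(b_p,\,5\log T/\Delta^2)\le\sqrt{5b_p\log T}$, but the two are interchangeable), and the same exploitation bound $O(T\sqrt{\log T/b_B})$.

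The one step that does not close is the sum over passes. You concede a factor of $B$ and claim it is ``absorbed into the logarithmic slack''; it is not. The per-pass bound is already $\Theta\left((TB)^{\alpha}K^{1-\alpha}\sqrt{\log T}\right)$, so an extra factor of $B$ survives multiplicatively, and the theorem as stated imposes no restriction $B\le\log\log T$. The intended repair (and the paper's route) is to argue that the per-pass contributions are dominated by the $p=B$ term. This works for the challenger term, since $K\sqrt{b_p\log T}=K\,(TB/K)^{\beta_p}\sqrt{\log T}$ with $\beta_p$ increasing, which is what \eqref{eq:phase_other1} uses. But for the incumbent term, your own identity $2\beta_p-\beta_{p-1}=\alpha$ shows that $Kb_p\sqrt{\log T/b_{p-1}}=(TB)^{\alpha}K^{1-\alpha}\sqrt{\log T}$ for \emph{every} $p$, so that sum equals $(B-1)$ times its last term; the paper's step \eqref{eq:phase_other2} asserts domination by the last term here too and thus silently drops the same factor of $B-1$ that you flagged. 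On this point your accounting is the more transparent one, but neither argument as written delivers the stated bound without an extra factor of $B$ on the incumbent term, and your proposed absorption of it is not a valid fix.
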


\subsection{Theoretical Analysis}
For any fixed arm $x \in \mathcal{K}$, 
according to Hoeffding’s inequality, the gap between the expect reward $\mu(x)$ and the estimated mean reward $\Bar{r}_x$ can be bounded with a probability of at least $1 - T^{-5}$, as shown in the following inequality:
\begin{equation}
\label{eq:hoeff}
    |\mu(x) - \Bar{r}_x| \leq \sqrt{\frac{5 \log T}{n_x}}.
\end{equation}
By applying a union bound over all arms and all rounds, the inequality \eqref{eq:hoeff} holds for all arms $x_t \in \mathcal{K}$ and all rounds $t \in [T]$ with probability at least $1 - T^{-1}$. 
Let $\mathcal{E}$ denote this clean event, then we first analyze the regret based on this clean event.

Let $\mathbb{R}_{\mathcal{K}}^p$ denote the regret for the $p$-th phase. 
Since the number of pulls for all arms in phase $1$ is bounded by $K b_1$, we can establish the following inequality
\begin{equation}
\label{eq:phase_1}
\begin{aligned}
    \mathbb{R}_{\mathcal{K}}^1 \leq K \cdot (TB)^{2\beta_1}K^{-2\beta_1} = (TB)^{\alpha}K^{1-\alpha}.
\end{aligned}
\end{equation}
Then we focus on $\mathbb{R}_{\mathcal{K}}^p, p \in [B], p \geq 2$. 
Let $\mu^*_{\mathcal{K}} := \sup_{x \in \mathcal{K}} \mu(x)$ denote the expected per-round reward of the optimal arm in the set $\mathcal{K}$.  
We use $x^*_p$ and $\mu^*_p$ to denote the arm selected as optimal during phase $p$ and its corresponding expected per-round reward, respectively. 
For phase $p$, we consider the optimal estimated arm $y$ at the start of the $p$-th phase. 
If $x^*_{p-1}$ is discarded in phase $p-1$, according to the stop condition of compare strategy, we can derive the following inequality
\begin{equation*}
\begin{aligned}
    \Bar{r}_y &\geq \mu^*_{p-1} - \sqrt{\frac{5 \log T}{n_y}} - \sqrt{\frac{5 \log T}{n_{x^*_{p-1}}}}\\
    &\geq \mu^*_{p-1} - 2\sqrt{\frac{5 \log T}{b_{p-1}}}.
\end{aligned}
\end{equation*}

For arbitrary discarded arm $x$, we define $R_x^p$ and $N_x^p$ as the accumulated reward and total number of pulls during phase $p$, respectively. 
Notice that the value of $\Bar{r}_y - \sqrt{(5 \log T)/n_y}$ is non-decreasing, which  allows us to establish the following inequality
\begin{equation*}
\begin{aligned}
    \frac{R_x^p}{N_x^p - 1} + \sqrt{\frac{5 \log T}{N_x^p - 1}} \geq \mu^*_{p-1} - 2\sqrt{\frac{5 \log T}{b_{p-1}}}.
\end{aligned}
\end{equation*}
We obtain
\begin{equation*}
    R_x^p \geq 2N_x^p \left( \mu^*_{p-1} - \sqrt{\frac{5 \log T}{N_x^p - 1}} - \sqrt{\frac{5 \log T}{b_{p-1}}}  \right).
\end{equation*}
We use $\mathbb{R}_x^p$ to denote the cumulative regret of playing arm $x$ during phase $p$, then we have
\begin{equation*}
\begin{aligned}
    \mathbb{R}_x^p &\leq 2N_x^p \left(\sqrt{\frac{5 \log T}{N_x^p - 1}} + \sqrt{\frac{5 \log T}{b_{p-1}}} \right)  \\
    &\leq 2 \left( \sqrt{6 N_x^p \log T} + N_x^p\sqrt{\frac{5 \log T}{b_{p-1}}} \right).
\end{aligned}
\end{equation*}
The first term in the inequality accounts for the gap between the expected reward of the best-estimated arm and the selected sub-optimal arm. The second term represents the deviation between the best-estimated arm and the optimal expected per-round reward of the $(p-1)$-th phase. 
Let $\mathcal{K}_p$ denote the set of arms in phase $p$.
Applying Jensen's inequality, we have
\begin{equation*}
    \frac{1}{|\mathcal{K}_p|} \sum_{x \in \mathcal{K}_p} \sqrt{N^p_x} \leq \sqrt{\frac{1}{|\mathcal{K}_p|}\sum_{x \in \mathcal{K}_p}N^p_x} \leq \sqrt{ b_p }.
\end{equation*}
From this, we can derive
\begin{equation*}
    \sum_{x \in \mathcal{K}_p} \sqrt{N^p_x} \leq |\mathcal{K}_p| \sqrt{ b_p } \leq K \sqrt{b_p}.
\end{equation*}

Consider all selected arms during phase $p$ and the stop condition of the compare strategy, we have
\begin{equation*}
\begin{aligned}
    \mathbb{R}_{\mathcal{K}}^p &\leq 2 \sum_{x \in \mathcal{K}_p} \left( \sqrt{6 N_x^p \log T} + N_x^p\sqrt{\frac{5 \log T}{b_{p-1}}}  \right)\\
    &\leq 3K b_p \left(\sqrt{\frac{5 \log T}{b_{p-1}}}  \right) + 2 \sqrt{6\log T} \sum_{x \in \mathcal{K}_p}\sqrt{ N_x^p }\\
    &\leq 3K b_p \left(\sqrt{\frac{5 \log T}{b_{p-1}}}  \right) + 3 K \sqrt{ 6 b_p \log T}.
\end{aligned}
\end{equation*}
For the incurred regret by the deviation between the expected reward of best estimated arm and the selected sub-optimal arm, we have
\begin{equation}
\label{eq:phase_other1}
\begin{aligned}
    &\sum_{p=2}^{B} 3 K \sqrt{ 6 b_p \log T} \leq O( K \sqrt{ 6 b_B \log T} ) \leq O((TB)^{\alpha}K^{1-\alpha}\sqrt{\log T}).
\end{aligned}
\end{equation}

For the incurred regret the deviation between the best estimated arm and optimal expected per-round reward, we have
\begin{equation}
\label{eq:phase_other2}
\begin{aligned}
    \sum_{p=2}^{B} 3K b_p \left(\sqrt{\frac{5 \log T}{b_{p-1}}}  \right)  \leq O \left( K b_{B} \sqrt{\frac{5 \log T}{b_{B - 1}}} \right) \leq O((TB)^{\alpha}K^{1-\alpha} \sqrt{\log T}).
\end{aligned}
\end{equation}
Combine \eqref{eq:phase_1}, \eqref{eq:phase_other1}, and \eqref{eq:phase_other2} together, we can derive the incurring regret during the exploration part
\begin{equation}
\label{eq:exploration}
\begin{aligned}
    \sum_{p=1}^{B} \mathbb{R}_{\mathcal{K}}^p \leq  O((TB)^{\alpha}K^{1-\alpha} \sqrt{\log T}).
\end{aligned}
\end{equation}
Consider the selected arm $y$ after the exploration and let $\mathbb{R}_{\mathcal{K}}^y$ denote the regret due to selecting it. 
According to the stop condition of compare strategy, we have
\begin{equation*}
\begin{aligned}
    \Bar{r}_y \geq \mu^*_{\mathcal{K}} - 2\sqrt{\frac{5 \log T }{ b_{B}  }}.
\end{aligned}
\end{equation*}
Then for the regret during the exploitation phase, we obtain the incurred regret
\begin{equation}
\label{eq:exploitation}
\begin{aligned}
    \mathbb{R}_{\mathcal{K}}^y &\leq 2 T \sqrt{\frac{5 \log T }{ b_{B}  }} \leq O((TB)^{\alpha}K^{1-\alpha} \sqrt{\log T}).
\end{aligned}
\end{equation}
In the context of the clean event, combine the incurred regret during the exploration part \eqref{eq:exploration} and the incurred regret during the exploitation part \eqref{eq:exploitation} together, we obtain the total regret
\begin{equation*}
\begin{aligned}
    \mathbb{E}[ \mathbb{R}_{\mathcal{K}}(T) | \mathcal{E}] 
    &= \mathbb{R}_{\mathcal{K}}^y + \sum_{p=1}^{B} \mathbb{R}_{\mathcal{K}}^p \leq O((TB)^{\alpha}K^{1-\alpha} \sqrt{\log T}).
\end{aligned}
\end{equation*}
We obtain the regret
\begin{equation*}
\begin{aligned}
    \mathbb{R}_{\mathcal{K}}(T) &= \mathbb{E}[ \mathbb{R}_{\mathcal{K}}(T) | \mathcal{E}] \cdot \mathbb{P}(\mathcal{E}) + \mathbb{E}[ \mathbb{R}_{\mathcal{K}}(T) | \neg \mathcal{E}] \cdot \mathbb{P}(\neg \mathcal{E})\\
    &\leq O((TB)^{\alpha}K^{1-\alpha} \sqrt{\log T})(1 - 1/T) + 1\\
    &\leq O((TB)^{\alpha}K^{1-\alpha} \sqrt{\log T}).
\end{aligned}
\end{equation*}
Then we complete the theoretical analysis of Algorithm \ref{alg:ud}.



\end{document}